\title[Le Cam meets LeCun]{Le Cam meets LeCun: Deficiency and Generic Feature Learning}
\newcommand{\RR}{\protect\mathbb{R}}
\newcommand{\EE}{\protect\mathbb{E}}
\newcommand{\dist}{\raise.17ex\hbox{$\scriptstyle\mathtt{\sim}$}}
\newcommand{\Lbar}{\underline{L}}
\DeclareMathOperator*{\arginf}{arg\,inf}
\newtheorem{mydef}{Definition}
\newtheorem{myex}{Example}
\newtheorem{mythm}{Theorem}
\newtheorem{mylemma}{Lemma}
\begin{document}

\maketitle
\thispagestyle{empty}

\begin{abstract}
``Deep Learning" methods attempt to learn generic features in an unsupervised fashion from a large unlabelled data set. These generic features should perform as well as the best hand crafted features for any learning problem that makes use of this data. We provide a definition of generic features, characterize when it is possible to learn them and provide algorithms closely related to the deep belief network and autoencoders of deep learning. In order to do so we use the notion of deficiency distance and illustrate its value in studying certain general learning problems.
\end{abstract}

\section{Introduction}

``Deep" unsupervised feature learning methods \citep{Bengio2009,Hinton2006,Vincent2008,Tenenbaum2000,Lecun2013} present a challenge to learning theory. This paper takes up this challenge, of explaining when and why these techniques work. Is it possible to learn generic features from data in an unsupervised fashion that perform well in a multitude of tasks, and if so how do we learn these features? Following the work of the statistician Lucien Le Cam \citep{Cam2011,Le1964,Lecam1974} and utilizing the techniques of statistical decision theory, in particular the comparison of statistical experiments \citep{Blackwell1951,Blackwell1953,Torgersen1991,Ferguson1967} we show (theorem 3) that it is possible to construct generic features $Z$ from data $X$ if and only if one can find a encoder/decoder pair 
$$
\xymatrix{
X \ar[rr]^{\text{encoder}} & & Z \ar[rr]^{\text{decoder}} & & X
}
$$
with low probability of reconstruction error. Furthermore, the worst case difference in performance of the best decision rule that uses such features versus the best decision rule that uses the raw data is bounded above by the probability of reconstruction error. We also show that we can learn this encoder/ decoder pairing in a hierarchical fashion and that the probability of reconstruction error of such a``stacked" system is bounded by the sum of the probability of reconstruction errors of each layer.

While our approach is abstract, the ultimate pay off will be a novel inequality (theorem 2) that provides a characterization of when generic feature learning is possible. This inequality coupled with the concept of deficiency \citep{Cam2011,Torgersen1991} (to be explained in the paper) illuminates the algorithms used in deep learning and provides means to judge the generic quality of the features learnt by such methods.

\section{The General Learning Problem}

For all of the following assume that all of the measure spaces $\Theta,A,X,Y,Z$ and so on are finite. This does not restrict any of the results, rather it allows for a cleaner presentation free of measure theoretic technicalities as well as boundedness and existence concerns.

A learning problem is a quintuple $(\Theta,X,T,A,L)$. $\Theta$ is a set of possible ``true hypothesis" or unknowns. While we cannot observe $\Theta$ directly, we can observe data in some set $X$. $T$ is a relationship between the two sets $\Theta$ and $X$ called the \emph{experiment}. $T(\theta)$ tells us what data we expect to see if $\theta$ is the true hypotheses. Ultimately we are required to make a decision by choosing an action $a\in A$, and our performance is measured by a loss function $L: \Theta \times A \rightarrow \RR$. We view the loss as an integral part of a learning problem and as such do not place any restrictions on it other than boundedness. As is usual in statistical learning, for our possible relationships we use markov kernels (conditional probability assignment/stochastic matrices).
\begin{mydef}

A \emph{Markov kernel} $T: \Theta \rightsquigarrow X$ is a function from $\Theta$ to $\mathcal{P}(X)$, the set of probability distributions on $X$.

\end{mydef}
For arbitrary sets $Y$ and $Z$, denote by $M(Y,Z)$, the set of all Markov kernels from $Y$ to $Z$. As we can represent $\mathcal{P}(Y)$ by vectors in $\RR^{|Y|}$ with positive entries we have $\mathcal{P}(Y) \subset \RR^{|Y|}$. As such one can represent a Markov kernel $T: Y \rightsquigarrow Z$ as an $|Z| \times |Y|$ matrix of positive entries where the sum of all entries in each column is equal to $1$. It is easily verified that $M(Y,Z)$ is a closed convex subset of $\RR^{|Z|\times |Y|}$, the set of all $|Z| \times |Y|$ matrices. 

A function $f: Y\rightarrow Z$ defines a Markov kernel $F$ with $F(y) = \delta_{f(y)}$, a point mass distribution on $f(y)$. For every measure space $X$, there are two special Markov kernels, the identity (or completely informative) Markov kernel from the identity function $\operatorname{id}_X: X \rightarrow X$, and the completely uninformative Markov kernel from the function $\bullet_X : X \rightarrow \bullet$ from $X$ to a one element set $\bullet$.

From a prior distribution $\pi\in \mathcal{P}(\Theta)$ and a Markov kernel $T: \Theta \rightsquigarrow X$ we can construct a joint distribution $T\otimes \pi \in \mathcal{P}(\Theta \times X)$. Using the matrix vector representation, this is achieved by post multiplying $T$ with a diagonal matrix with $\pi$ on the diagonal, $T\otimes \pi = T \operatorname{diag}(\pi)$. This is no different to the standard product rule $P(\theta,x) = P(x|\theta) P(\theta)$.

Given a prior distribution $\pi\in \mathcal{P}(\Theta)$ and a Markov kernel $T: \Theta \rightsquigarrow X$ we denote the Markov kernel obtained by Bayes rule by $T^*: X \rightsquigarrow \Theta$. 

A learning problem can more compactly be represented as the pair $(L,T)$ where $\Theta,A,X$ can be inferred from the type signatures of $L$ and $T$. We measure the size of loss functions by $\lVert L \rVert_\infty = \sup_{\theta,a} |L(\theta,a)|$

\subsection{Decision Rules}

Upon observing data $x\in X$ we are required to relate $x$ to a set of actions $A$ by some other Markov kernel $d:X \rightsquigarrow A$ known as a (randomized) decision rule.
{\large $$
\xymatrix{
\Theta \ar@{~>}[rr]^T && X \ar@{~>}[rr]^d && A}
$$}
We are judged on the quality of the composed relation $d\circ T : \Theta \rightsquigarrow A$.

\textbf{Definition (Composition)} \emph{Suppose $T_1: X \rightsquigarrow Y$ and $T_2: Y \rightsquigarrow Z$ are Markov kernels. Then we can compose $T_1$ and $T_2$ yielding $T_3: X \rightsquigarrow Z$ by matrix multiplication 
$$
T_3 = T_2 \circ T_1 = T_2 T_1.
$$}
A Markov kernel $T: X \rightsquigarrow Y$ provides a function $T : \mathcal{P}(X) \rightarrow \mathcal{P}(Y)$ by matrix multiplication. To calculate $T(\pi), \pi \in \mathcal{P}(X)$ we identify $\pi$ with a vector and $T$ with a matrix and use matrix multiplication. This function is convex linear.

\subsection{Risk and Value: Ranking Decision Rules and Learning Problems}

Given a learning problem $(\Theta,X,T,A,L)$ one can rank decision rules $d:X \rightsquigarrow A$ using the full Bayes risk
\begin{align*}
R_L : \mathcal{P}(\Theta) \times M(\Theta,A) &\rightarrow \RR \\
(\pi,D) &\mapsto \EE_{\theta \dist \pi}\EE_{a\dist D(\theta)} L(\theta,a). 
\end{align*} 
Here $\pi \in \mathcal{P}(\Theta)$ is a prior distribution on $\Theta$ which reflects which hypotheses we feel are more or less likely to be true. Note that both $\mathcal{P}(\Theta)$ and $M(\Theta,A)$ are convex sets and that $R_L$ is convex bilinear (the same as bilinear but restricted to convex combinations). Alternately, taking a supremum over the prior yields the max risk.

\textbf{Ranking Learning Problems}. We also rank the difficulty of learning problems. The greatest challenge in a learning problem comes from the fact we can not use an arbitrary decision rule $D \in M(\Theta,A)$. Rather, we are restricted to a certain subset of $M(\Theta,A)$ that ``factors through" $T$. We are only allowed to use the data we see.
\begin{mydef}[Factoring Through]	
Suppose we have two Markov kernels $T: X \rightsquigarrow Z$ and $U: X \rightsquigarrow Y$. We say that $U$ \emph{factors through} $T$ (written $T | U$) if there exists a Markov kernel $U/T: Z \rightsquigarrow Y$ such that $U = (U/T) \circ T$. Denote by
$$
M(X,Y)_T := \{U\in M(X,Y): U = (U/T) \circ T \ \text{for some}\ U/T \in M(Z,Y) \}.
$$
\end{mydef}
If $T|U$ then $U$ can be thought of as $T$ with extra noise $U/T$. The reader is directed to section 1 of the appendix for more properties of factoring through. In this notation $d = D/T$. If $T|D$ we have
\begin{align*}
R_L(\pi,D) &= R_L(\pi, (D/T)\circ T) \\
&= \EE_{\theta \dist \pi} \EE_{a \dist (D/T) \circ T(\theta)} L(\theta,a) \\
&= \EE_{\theta \dist \pi} \EE_{x \dist T(\theta)} \EE_{a \dist (D/T)(x)} L(\theta,a).
\end{align*}
We assign a \emph{value}
$$
\mathcal{V}(\pi,\Theta,X,T,A,L) = \mathcal{V}_L(\pi,T) := \inf_{D \in M(\Theta,A)_T} R_L(\pi,D)
$$
to a learning problem, with lower value being better. Taking a supremum of the value over the prior yields the minimax risk. The value is the risk of the best possible decision rule for the learning problem at hand. 

\textbf{Bayes Decision Rules are Optimal}. If we use $V_L(\pi,T)$ to order decision rules then the best $D \in M(\Theta,A)_T$ is found by using Bayes rule
$$
(D/T)(x) = \arginf_a \EE_{\theta \dist T^*(x)} L(\theta,a),
$$
with risk
$$
R_L(\pi,D) = \mathcal{V}_L(\pi,T) = \EE_{x \dist \pi_X} \inf_a \EE_{\theta \dist T^*(x)} L(\theta,a) = \EE_{x \dist \pi_X} \Lbar(T^*(x)), \\
$$
where $\Lbar : \mathcal{P}(\Theta) \rightarrow \RR$, $\Lbar(\pi) = \inf_a \EE_{\theta \dist \pi} L(\theta,a)$. Hence $\Lbar$ is concave. This results allows us to parametrize the action set $A$ by $\mathcal{P}(\Theta)$, by taking 
\begin{align*}
f: \mathcal{P}(\Theta) &\rightarrow A \\
\pi &\mapsto \arginf_a \EE_{\theta \dist P} L(\theta,a)
\end{align*}
effectively properising the loss function. $\hat{L}(\theta,\pi) := L(\theta,f(\pi))$ is a proper loss \citep{Reid2009b,Dawid2007,Grunwald2004,Parry2012}. There are deep connections between $\Lbar$ and $\hat{L}$, we review some of these in section two of the appendix. 

\textbf{Connections to other Information Measures.}

There are many connections between $V_L(\pi,T)$ and different information measures present in the literature.

\textbf{Definition} \emph{For a convex $f: \RR_+^{n-1} \rightarrow \RR$ the $f$-information of a set of $n$ distributions $P_1,\dots, P_{n}\in \mathcal{P}(X)$ is 
$$
I_{f}(P_1,\dots, P_{n}) := \int_X f(\frac{d P_2}{d P_1}, \dots, \frac{d P_{n}}{d P_1}) d P_1.
$$}
$f$-informations are a multi distribution extension of $f$ divergences and are used in certain generalizations of rate-distortion theory where they produce better bounds than the standard techniques \citep{Ziv1973,2573956,Reid2009b,Garca-Garca}. For suitable choices of $f$ one can recover more known measures of information such as mutual information.

\textbf{Theorem} \emph{For all experiments $T$, loss functions $L$ and priors $\pi$, the gap between then the value of $T$ and the least informative experiment $\bullet_\Theta$ is a $f$-information for suitable $f$
$$
\mathcal{V}_L(\pi,\bullet_\Theta) - \mathcal{V}_L(\pi,T) = I_{f}(T(\theta_1),\dots, T(\theta_n)) = I_{f}(T)
$$}
We direct the reader to \citet{Reid2009b,Garca-Garca}. By the bijections presented in these two papers, one can replace $V_L(\pi,T)$ by these divergences in all that follows. In particular any $|\mathcal{V}_L(\pi,T) - \mathcal{V}_L(\pi,U)|$ can be replaced with a $|I_{f}(T) - I_{f}(U)|$ for suitable $f$, with no effect on the result. The reader is directed to section 3 of the appendix for proof.

\subsection{Examples}

Here we present some examples of familiar learning problems phrased in this more abstract language. Normally there is a distinction between \emph{learning algorithms}, something that takes a data set of $n$ instance-label pairs and produces a classifier, and a \emph{decision rule} that is the learnt classifier. Here we do not make such a distinction. Both learning algorithms and decision rules produce \emph{actions}, hence we only use the term decision rule.

\begin{myex}[Classification]

$\Theta = \{-1,1\}$ and a Markov kernel $T: \Theta \rightsquigarrow X$ is then a pair of distributions $T(1) = P$, $T(-1)=Q$ on $X$. Normally $A = \Theta$ and a decision rule picks the corresponding label for a given observed $x \in X$. Different losses could be used, eg the 0-1 loss $L_{01}$.

\end{myex}
\begin{myex}[Supervised Learning]

There is a space of labels $Y = \{-1,1\}$ and a space of covariates $Z$ with $X = (Z \times Y)^n$. $\Theta = \mathcal{P}(Z \times Y)$ the space of joint distributions on $Z \times Y$ with $T: \Theta \rightsquigarrow X$ the map that sends each distribution to its n-fold product. $A$ is then some set of classifiers (eg linear hyperplanes/kernel machines and so on). A decision rule then produces a classifier $a$ from n pairs $(z_i,y_i)$. For example empirical risk minimization algorithms, $ERM: X \rightarrow A$, pick the classifier that minimizes the empirical loss on the observed training set. Many suitable losses exists but normally $L(\theta,a)$ is the misclassification probability of the classifier $a$ when used against the distribution $\theta$.

\end{myex}
\begin{myex}[Generalized Supervised Learning]
$\Theta$ and $A$ are the same as supervised learning, although the data observed is different. For example in semi-supervised learning we have $X = (Z \times Y)^n \times Z^m$, we observe $n$ instance label pairs and $m$ instances. $T:\Theta \rightsquigarrow X$ then maps each joint distribution $\theta$ to a product of $n$ copies of itself and $m$ copies of its marginal distribution over instances.
\end{myex}
\begin{myex}[Active Learning]

$\Theta$ could be anything with $X = Z^n$, length n sequences in some set $Z$. Each active learning policy determines a different $T: \Theta \rightsquigarrow X$.

\end{myex}

\section{Feature Learning}

Starting from a learning problem $(\Theta,X,T,A,L)$, feature learning methods aim to extract features $\phi: X \rightsquigarrow Z$. One then bases all decisions on these features.
{\large $$
\xymatrix{
\Theta \ar@{~>}[rr]^T && X \ar@{~>}[rr]^\phi && Z \ar@{~>}[rr]^d && A}
$$}
These methods swap the original learning problem with $(\Theta,Z,\phi \circ T,A,L)$. Normally the space $Z$ is smaller/ of lower dimension than $X$ and aims at presenting a ``compressed" view of the information contained in $X$. Features can be used for several reasons including communication/ storage constraints, increased performance (ie by implementing decision rules based on Z rather than X directly), knowledge discovery and to avoid ``curse of dimensionality" problems.

\subsection{Supervised Feature Learning}

There has been much attention in the Machine Learning literature on supervised feature learning techniques, were $\Theta,T,L$ and the prior $\pi$ are fixed. These methods construct features by minimizing the \emph{feature gap}
$$
\Delta\mathcal{V}_L(\pi,\phi,T) := \mathcal{V}_L(\pi,\phi \circ T) - \mathcal{V}_L(\pi,T).
$$ 
There is now a general framework for solving such problems based largely on variations of the Blahut-Arimoto Algorithm from Rate Distortion theory \citep{Banerjee2005,Tishby2000,Cover2012}. For particular choices of $T$ and $L$ these methods reproduce many clustering methods such as k-means. We review these methods in section 4 of the appendix. These feature learning methods are not general enough for our purposes as they rely on both the experiment and loss. For example if we wish to \emph{learn} $T$ from data given by $(\theta, x)$ pairs, then we are required to learn the features after we have learnt the experiment $T$. Ideally we would like to learn a feature map $\phi: X \rightsquigarrow Z$ independently from $T$ so that learning $\phi \circ T$ is just as beneficial as learning $T$ no matter what $T$ is.

\subsection{Generic Feature Learning}

For many learning problems, a large amount of unlabelled/loosely labelled data $X$ is readily available. For example, with any problem involving images one only has to enter some basic search queries into google to be presented with millions of instances. One of the main arguments of the deep learning community is that while this data may not be of direct use in learning classifiers, it can be of great use in learning feature representations. There exists many methods in the literature to learn features from unlabelled data.

In line with these methods we consider the following relaxation of the supervised feature learning problem. We assume that $\Theta, T,A,L$ and the prior $\pi$ are allowed to vary leaving only $X$ fixed, with one restriction. We assume that there is enough unlabelled data collected from the marginal distribution $\pi_X$ on $X$ that we are able to form an accurate estimate of $\pi_X$. We consider all learning problems and priors $\pi$ that are consistent with this information about $X$ ie with $T(\pi) = \pi_X$. We then seek to find features $\phi$ so that the value of $\mathcal{V}_L(\pi,\phi \circ T)$ is as close to $\mathcal{V}_L(\pi,T)$ as possible, no matter what what $\pi, \Theta, T,A,L$ are. To ensure that minor differences in value are not exploited by multiplying the loss function by a large constant, we penalize the size of the loss function by $\lVert L \rVert_\infty$.

\begin{mydef}[Generic Features]

Fix a measure space $X$ and a distribution $\pi_X$. $\phi: X \rightsquigarrow Z$ are \emph{generic features of quality $\epsilon$} for $X$ if for all learning problems $(\Theta,X,T,A,L)$ and priors $\pi$ with $T(\pi) = \pi_X$ we have
$$
\Delta\mathcal{V}_L(\pi,\phi,T) \leq \epsilon \lVert L \rVert_{\infty}.
$$

\end{mydef} 
Ideally we want to make $\epsilon$ as small as possible, and if $\epsilon$ is $0$ then our features do not ever decrease the value. 

For our more relaxed problem the value of our features is effectively 
\begin{align*}
&\sup_{L: \lVert L \rVert_\infty \leq 1} \sup_T \sup_{\pi \ : \ T(\pi) = \pi_X} \Delta\mathcal{V}_L(\pi,\phi,T) \\
&= \sup_{\Theta} \sup_A \sup_{L\in \RR^{\Theta \times A}: \lVert L \rVert_\infty \leq 1} \sup_T \sup_{\pi \ : \ T(\pi) = \pi_X}\Delta\mathcal{V}(\Theta,A,X,\phi,T,\pi)
\end{align*}
Luckily supremums like these have been tackled in theoretical statistics particularly in the work of Lucien Le Cam \citep{Cam2011,Le1964,Lecam1974}. In his 1964 paper Le Cam coined the deficiency distance as an extension of David Blackwell's ordering of experiments \cite{Blackwell1953,Blackwell1951} and as a means to provide an approximate version of the statistical notion of sufficiency. This quantity was used in his later work to form a metric not just on probability distributions but on experiments, and in particular allows one to calculate supremums over all loss functions and priors with fixed $\Theta,T$. We introduce this  quantity (the deficiency) in the next section.

\section{Approximate Factoring Through and Deficiency}

Suppose $T: \Theta \rightsquigarrow X$ and $U: \Theta \rightsquigarrow Y$ are Markov kernels where $U$ does not factor through $T$. We measure the degree to which $U$ fails to factor through $T$ by the \emph{weighted directed deficiency} \citep{Torgersen1991}.
$$
\delta_{\pi}(T,U) := \inf_{V: X \rightsquigarrow Y} \EE_{\theta \dist \pi} \lVert U(\theta) - V \circ T(\theta) \rVert.
$$
$\lVert P - Q \rVert$ is the variational divergence between the distributions $P,Q \in \mathcal{P}(X)$, a standard metric on probability distributions (see section 5 of the appendix for properties). Calculating weighted directed deficiencies is a convex (actually linear) optimization problem. One has 
\begin{align*}
f(\pi,V) &= \EE_{\theta \dist \pi} \lVert U(\theta) - V \circ T(\theta) \rVert \\
&= \EE_{\theta\dist \pi} \lVert U(\theta) - V \circ T(\theta) \rVert_1 \\
&= \lVert U \otimes \pi - (V \circ T) \otimes \pi \rVert_1 \\
&= \lVert U \ \operatorname{diag}(\pi) - V\ T\ \operatorname{diag}(\pi)  \rVert_1
\end{align*}
is linear in $\pi$. Since the variational divergence $||P - Q||$ is convex in $Q$, $f$ is also convex in $V$, because $\lVert U \ \operatorname{diag}(\pi) - V\ T\ \operatorname{diag}(\pi)  \rVert_1$ is the composition of a linear function and a convex function. Hence determining weighted directed deficiencies is a $l_1$ minimization problem. Fast methods exist for solving this problem (eg the well known simplex method of linear programming). 

Taking a supremum over the prior $\pi$ yields the \emph{directed deficiency}, 
$$
\delta(T,U) := \sup_\pi \delta_{\pi}(T,U)  =\inf_{V: X \rightsquigarrow Y} \sup_{\theta} \lVert U(\theta) - V \circ T(\theta) \rVert.
$$
where the second follows from the minimax theorem \citep{Komiya1988}. For the sake of checking whether $T|U$ it suffices to use the weighted directed deficiency and a prior that does not put zero probability on any $\theta$. In this case $\delta_{\pi}(T,U) = 0$ if and only if $T|U$ \citep{Torgersen1991}. The \emph{weighted deficiency}, and \emph{deficiency} are respectively
\begin{align*}
\Delta_\pi(T,U)&:=\max(\delta_\pi(T,U),\delta_\pi(U,T)) \\
\Delta(T,U) &:= \sup_\pi \Delta_\pi(U,T).
\end{align*}
$\Delta(T,U)=0$ if and only if $T|U$ and $U|T$, when $T$ is \emph{isomorphic} to $U$ written $T \cong U$. The deficiency distance is a true metric on the space of experiments (modulo isomorphic experiments). A proof of this is included in the appendix.

\subsection{Relation to Risk}

Factoring through and approximate factoring through are deeply related to the worst case difference in performance between two learning problems with the same $\Theta$ as the loss is varied. Here we state the three theorems that highlight the connections between factoring through and risk \citep{Torgersen1991}. Fix $\Theta$ and two experiments $T: \Theta \rightsquigarrow X$ and $U: \Theta \rightsquigarrow Y$. 

\textbf{Theorem (Information Processing)} \emph{If $T|U$ then for any loss function $L$ and prior $\pi$ \\$\mathcal{V}_L(\pi,T) \leq \mathcal{V}_L(\pi,U)$. }

In particular the information processing theorem implies that $\Delta\mathcal{V}_L(\pi,\phi,T)\geq0$.

\textbf{Theorem (Blackwell-Sherman-Stein)} \emph{$T|U$ if and only if $\mathcal{V}_L(\pi,T) \leq \mathcal{V}_L(\pi,U)$ for all loss functions $L$ and priors $\pi$.}

\textbf{Theorem (Randomization)} \emph{Fix $\epsilon>0$, $T,U$ and $\pi$. $\mathcal{V}_L(\pi,T) \leq \mathcal{V}_L(\pi,U) + \epsilon \lVert L \rVert_{\infty}$ if and only if $\delta_\pi(T,U) \leq \epsilon$ for all loss functions $L$.}

These three theorems allow one to move between decision theoretic notions such as risk and value to probability theoretic notions such as factoring through. For example the original definition of sufficiency  can be interpreted in terms of factoring through.

\textbf{Theorem} \emph{Fix an experiment $T:\Theta \rightsquigarrow X$ and a function $f: X \rightarrow Y$. f is a \emph{sufficient statistic} if $f \circ T \cong T$.}

By the Blackwell-Sherman-Stein theorem we have an equivalent condition for sufficiency in terms of value.

\textbf{Theorem} \emph{Fix an experiment $T:\Theta \rightsquigarrow X$ and a function $f: X \rightarrow Y$.Then $f$ is a sufficient statistic if $\mathcal{V}_L(\pi,f\circ T) = \mathcal{V}_L(\pi,f\circ T)$ for all $L$ and $\pi$.}

Isomorphic experiments always have the same value, no matter what the loss function or the set of actions. Approximately isomorphic experiments, ones where $\Delta(T,U)$ is small, always have approximately the same value. Due to the similarities between learning features and sufficiency statistics, it should be of no surprise that tools for working with approximate sufficiency appear in feature learning.

The Randomization theorem is an example of an approximate notion in probability theory (here approximate sufficiency) has a dual approximate notion in terms of risk. 
\begin{mythm}

For all experiments $U,T$ and all priors $\pi$
$$
\Delta_\pi(U,T) = \sup_L \frac{|\mathcal{V}_L(\pi,U) - \mathcal{V}_L(\pi,T)|}{\lVert L \rVert_{\infty}}
$$
\end{mythm}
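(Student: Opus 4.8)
The plan is to read off this identity as a direct corollary of the Randomization theorem, applied in both directions, together with the definition $\Delta_\pi(U,T)=\max(\delta_\pi(T,U),\delta_\pi(U,T))$. The content of the theorem is already in the Randomization theorem; what remains is to convert its $\epsilon$-quantified equivalence into an exact formula and then symmetrize.

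First I would upgrade the Randomization theorem to an exact expression for the directed deficiency. It states that $\delta_\pi(T,U)\le\epsilon$ holds if and only if $\mathcal{V}_L(\pi,T)\le\mathcal{V}_L(\pi,U)+\epsilon\lVert L\rVert_\infty$ for every loss $L$. Hence the set of admissible slacks $\{\epsilon\ge 0:\ \forall L,\ \mathcal{V}_L(\pi,T)-\mathcal{V}_L(\pi,U)\le\epsilon\lVert L\rVert_\infty\}$ equals $[\delta_\pi(T,U),\infty)$; but this same set is clearly $\big[\sup_L\frac{\mathcal{V}_L(\pi,T)-\mathcal{V}_L(\pi,U)}{\lVert L\rVert_\infty},\infty\big)$ (restricting the supremum to $\lVert L\rVert_\infty>0$, and using that $\delta_\pi$ is bounded so both quantities are finite). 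Comparing the left endpoints gives $\delta_\pi(T,U)=\sup_L\frac{\mathcal{V}_L(\pi,T)-\mathcal{V}_L(\pi,U)}{\lVert L\rVert_\infty}$, and exchanging the roles of $T$ and $U$ gives $\delta_\pi(U,T)=\sup_L\frac{\mathcal{V}_L(\pi,U)-\mathcal{V}_L(\pi,T)}{\lVert L\rVert_\infty}$.

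The second step is to assemble the two directed pieces. Using that a maximum of two suprema over a common index set is the supremum of the pointwise maximum, and that the two quotients just obtained are negatives of one another (so their maximum is their common absolute value), I would conclude
\begin{align*}
\Delta_\pi(U,T) &= \max\big(\delta_\pi(T,U),\,\delta_\pi(U,T)\big) \\
&= \sup_L\max\Big(\tfrac{\mathcal{V}_L(\pi,T)-\mathcal{V}_L(\pi,U)}{\lVert L\rVert_\infty},\ \tfrac{\mathcal{V}_L(\pi,U)-\mathcal{V}_L(\pi,T)}{\lVert L\rVert_\infty}\Big) \\
&= \sup_L\frac{|\mathcal{V}_L(\pi,U)-\mathcal{V}_L(\pi,T)|}{\lVert L\rVert_\infty},
\end{align*}
which is the claimed equality.

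I do not expect a real obstacle; the work is essentially bookkeeping on top of the Randomization theorem. The two places that need a line of care are the degenerate loss $L\equiv 0$, which I would simply exclude from the supremum since it contributes nothing, and the passage from the $\epsilon$-quantified equivalence to the exact identity in the first step — this is the only point where \emph{both} directions of the Randomization theorem are genuinely used, rather than just the information-processing-style inequality. The interchange of $\max$ and $\sup$ and the reduction to an absolute value are routine.
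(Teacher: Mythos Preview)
Your proposal is correct and follows essentially the same approach as the paper: both read the identity off from the two directions of the Randomization theorem, turning its $\epsilon$-quantified equivalence into an exact formula. The only organizational difference is that the paper works with $\Delta_\pi$ directly (proving the tiny lemma that $x\le\epsilon\Leftrightarrow y\le\epsilon$ for all $\epsilon$ forces $x=y$, and applying it to $\Delta_\pi$ and $\sup_L|\cdot|/\lVert L\rVert_\infty$), whereas you first derive exact formulas for the two directed deficiencies $\delta_\pi(T,U)$ and $\delta_\pi(U,T)$ separately and then combine them via $\max(\sup_L f,\sup_L g)=\sup_L\max(f,g)$ and $\max(a,-a)=|a|$.
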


This result is an improvement and generalization of the result contained in \citet{Liese2012} that applied only to binary experiments, $\Theta \cong \{-1,1\}$, and held with inequality. The proof is included in the appendix. We utilize this theorem and the randomisation theorem heavily in the following section.

\subsection{Reductions via Factoring Through}

Approximate factoring through can be used to transform decision rules for one learning problem to rules for another, with a provable bound on the performance of this decision rule. Suppose $D_U\in M(\Theta,A)_U$ is a decision rule used for the learning problem $(\Theta,Y,U,A,L)$. For another learning problem $(\Theta,X,T,A,L)$, we can construct a decision rule from a Markov kernel $V: X \rightsquigarrow Y$
{
\large
$$
\xymatrix{
& \Theta \ar@{~>}[ld]_T \ar@{~>}[rd]^U & &&  \\
X\ar@{~>}[rr]^V && Y \ar@{~>}[rr]^{D_U/U} && A
}
$$
}
$D_T = (D_U/U) \circ V \circ T$. Furthermore if $\epsilon = \EE_{\theta \dist \pi}\lVert U(\theta) - V \circ T(\theta) \rVert$ then 
$$
R_L(\pi,D_T) \leq R_L(\pi, D_U) + \epsilon \lVert L \rVert_{\infty}.
$$
By taking infimums over $V$ we obtain the smallest $\epsilon$ in the above.

\section{Analysis of Generic Feature Learning via Deficiency}

Assume one has enough data in some measure space $X$ to form a good estimate of the marginal distribution $\pi_X$. We wish to construct generic features $\phi: X \rightsquigarrow Z$ for $X$. This is equivalent to finding a $\phi$ that minimizes $\sup_{T}\Delta_\pi(T,\phi \circ T)$. One might imagine that this means finding for each $\phi$ the worst $T$, but this is not the case.
\begin{mythm}

For all experiments $T: \Theta \rightsquigarrow X$, for all measure spaces $Z$ and for all feature maps $\phi: X \rightsquigarrow Z$ 
$$
\Delta_\pi(T,\phi \circ T) \leq \Delta_{T(\pi)}(\operatorname{id}_X,\phi).  
$$

\end{mythm}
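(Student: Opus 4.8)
The plan is to show that both weighted deficiencies in the statement are governed entirely by their ``decoding'' direction, and that the decoding side obeys a data-processing inequality in the $\ell_1$ sense. First I would record two trivial observations. Taking the candidate kernel to be $\phi$ itself in the infimum defining $\delta_\pi(T,\phi\circ T)$ gives $\EE_{\theta\dist\pi}\lVert(\phi\circ T)(\theta)-\phi\circ T(\theta)\rVert=0$, so $\delta_\pi(T,\phi\circ T)=0$ --- indeed $\phi\circ T$ factors through $T$ by construction. Likewise, taking $\phi$ as the candidate in $\delta_{T(\pi)}(\operatorname{id}_X,\phi)$ and using $\phi\circ\operatorname{id}_X=\phi$ gives $\delta_{T(\pi)}(\operatorname{id}_X,\phi)=0$. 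Since directed deficiencies are non-negative, it follows that $\Delta_\pi(T,\phi\circ T)=\delta_\pi(\phi\circ T,T)$ and $\Delta_{T(\pi)}(\operatorname{id}_X,\phi)=\delta_{T(\pi)}(\phi,\operatorname{id}_X)$, so the theorem reduces to the single inequality
$$
\delta_\pi(\phi\circ T,T)\ \leq\ \delta_{T(\pi)}(\phi,\operatorname{id}_X),
$$
which says that the best reconstruction of the experiment $T$ from its features is never harder than the best reconstruction of the raw data $X$ from its features.

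To prove this, fix an arbitrary decoder $W: Z\rightsquigarrow X$ and compare the two objectives it produces. Writing $\delta_x$ for the point mass at $x$ (so $\operatorname{id}_X(x)=\delta_x$) and using that Markov kernels act convex-linearly on $\mathcal{P}(X)$, we have $T(\theta)=\EE_{x\dist T(\theta)}\delta_x$ and $W\circ\phi\circ T(\theta)=\EE_{x\dist T(\theta)}(W\circ\phi)(x)$. The triangle inequality for $\lVert\cdot\rVert$ then gives
$$
\lVert T(\theta)-W\circ\phi\circ T(\theta)\rVert=\bigl\lVert \EE_{x\dist T(\theta)}\bigl(\delta_x-(W\circ\phi)(x)\bigr)\bigr\rVert\ \leq\ \EE_{x\dist T(\theta)}\lVert\delta_x-(W\circ\phi)(x)\rVert.
$$
Applying $\EE_{\theta\dist\pi}$ to both sides and using the tower property together with $T(\pi)=\EE_{\theta\dist\pi}T(\theta)$ to rewrite $\EE_{\theta\dist\pi}\EE_{x\dist T(\theta)}$ as $\EE_{x\dist T(\pi)}$, we obtain $\EE_{\theta\dist\pi}\lVert T(\theta)-W\circ\phi\circ T(\theta)\rVert\leq\EE_{x\dist T(\pi)}\lVert\delta_x-(W\circ\phi)(x)\rVert$. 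The left side is the objective for $\delta_\pi(\phi\circ T,T)$ evaluated at $V=W$, and the right side is the objective for $\delta_{T(\pi)}(\phi,\operatorname{id}_X)$ evaluated at $V=W$; since $W$ was arbitrary, taking the infimum over all $W: Z\rightsquigarrow X$ on both sides gives the displayed inequality, hence the theorem.

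I do not expect a serious obstacle: the only genuine step is the contraction inequality of the second paragraph, which is just the weighted ($\ell_1$) analogue of the information-processing theorem --- pushing a reconstruction through the channel $T$ cannot increase its error --- and everything else is bookkeeping. The one point to be careful about is the direction convention of $\delta_\pi(\cdot,\cdot)$, and checking that in both $\delta_\pi(\phi\circ T,T)$ and $\delta_{T(\pi)}(\phi,\operatorname{id}_X)$ the infimum runs over the same class of kernels $Z\rightsquigarrow X$, which is what makes the term-by-term comparison legitimate.
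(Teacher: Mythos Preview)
Your proof is correct and is genuinely different from the paper's. The paper appeals to Theorem~1 (the risk characterization $\Delta_\pi(T,U)=\sup_L |\mathcal{V}_L(\pi,T)-\mathcal{V}_L(\pi,U)|/\lVert L\rVert_\infty$), rewrites each value as an average posterior Bayes risk $\EE_{x\dist\pi_X}\Lbar(T^*(x))$, and then introduces a lemma that pushes Bayes risks on $\Theta$ forward to Bayes risks on $X$ via $\Lbar\mapsto\Lbar\circ T^*$; the inequality then falls out because the supremum over the image of this map is no larger than the supremum over all of $\mathcal{L}(X)$. Your argument bypasses all of this decision-theoretic machinery and works directly with the variational definition: you reduce to the decoding direction, fix a decoder $W$, and use convexity of $\lVert\cdot\rVert_1$ (i.e.\ $\lVert\EE_{x\dist T(\theta)}(\delta_x-(W\circ\phi)(x))\rVert\leq\EE_{x\dist T(\theta)}\lVert\delta_x-(W\circ\phi)(x)\rVert$) together with the tower property. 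Your route is shorter and more elementary, needing neither the Randomization theorem nor the proper-loss correspondence; the paper's route, on the other hand, makes explicit the decision-theoretic content of the inequality --- that every loss on $\Theta$ induces a loss on $X$ through the posterior --- which is conceptually in line with the rest of the exposition even if it is heavier for the bare inequality.
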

No matter which feature map $\phi$ we use, the worst learning problem we can pit against it is the one that asks you to reconstruct $X$ directly from the features. The proof is straightforward and is included in the appendix. It hinges on the representation of $\mathcal{V}_L(\pi,T)$ in terms of average posterior Bayes risk and the Randomization theorem. By definition and as $\operatorname{id}_X | \phi$
\begin{align}
\Delta_{\pi_X}(\operatorname{id}_X,\phi) &= \delta_{\pi_X}(\phi,\operatorname{id}_X) \nonumber \\
&= \inf_{d: Z \rightsquigarrow X} \EE_{x\dist \pi_X}\lVert \operatorname{id}_X(x) - (d\circ\phi)(x)\rVert \nonumber \\
&= \inf_{d: Z \rightsquigarrow X} \EE_{x\dist \pi_X} \lVert \delta_x - (d\circ\phi)(x)\rVert  \\
&= 2 \inf_{d: Z \rightsquigarrow X} \EE_{x\dist \pi_X} \EE_{x' \dist (d\circ\phi)(x) } \mathbb{1}(x' = x), 
\end{align}
where from lines (1) to (2) we have used one of the equivalent forms of the variational divergence listed in the appendix. Hence $\Delta_{\pi_X}(\operatorname{id}_X,\phi)$ is equal to twice the minimal possible average reconstruction error from the encoder $\phi$ and the prior $\pi_X$. This means that finding the best generic features for the data $X$ involves finding the $\phi$  that gives the learning problem $(X,Z,\phi,X,L_{01})$ the highest value $\mathcal{V}_{L_{01}}(\pi_X,\phi)$. We term this problem the \emph{reconstruction problem}.
\begin{mythm}

Fix a prior $\pi_X$ and an $\epsilon >0$.  $\phi$ constitutes generic features of quality $\epsilon$ for $X$ if and only if for the reconstruction problem $(X,Z,\phi,X,L_{01})$, one can find a decision rule $d: Z \rightsquigarrow X$ with 
$$
R_{L_{01}}(\pi_X,d \circ \phi) \leq \frac{\epsilon}{2}.
$$
\end{mythm}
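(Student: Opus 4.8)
The plan is to show that the theorem reduces to the single equivalence
$$
\phi \text{ is generic of quality } \epsilon \iff \Delta_{\pi_X}(\operatorname{id}_X,\phi) \le \epsilon ,
$$
and then to close the gap using the identity $\Delta_{\pi_X}(\operatorname{id}_X,\phi) = 2 \inf_{d : Z \rightsquigarrow X} R_{L_{01}}(\pi_X, d\circ\phi)$ already established just above the statement. Since $X$ and $Z$ are finite, $M(Z,X)$ is compact and $d \mapsto R_{L_{01}}(\pi_X, d\circ\phi)$ is continuous (indeed linear), so this infimum is attained; hence the condition $\Delta_{\pi_X}(\operatorname{id}_X,\phi) \le \epsilon$ is literally the same as the existence of a $d$ with $R_{L_{01}}(\pi_X, d\circ\phi) \le \epsilon/2$. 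It therefore remains only to prove the displayed equivalence.

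For the ``if'' direction I would start from a $d$ with $R_{L_{01}}(\pi_X, d\circ\phi) \le \epsilon/2$. Since this $d$ is a feasible choice of $V$ in the infimum defining $\delta_{\pi_X}(\phi, \operatorname{id}_X) = \Delta_{\pi_X}(\operatorname{id}_X,\phi)$, we get $\Delta_{\pi_X}(\operatorname{id}_X,\phi) \le \EE_{x \dist \pi_X} \lVert \delta_x - (d\circ\phi)(x) \rVert = 2 R_{L_{01}}(\pi_X, d\circ\phi) \le \epsilon$. Then, given any learning problem $(\Theta,X,T,A,L)$ with a prior $\pi$ satisfying $T(\pi) = \pi_X$, Theorem 2 gives $\Delta_\pi(T,\phi\circ T) \le \Delta_{T(\pi)}(\operatorname{id}_X,\phi) = \Delta_{\pi_X}(\operatorname{id}_X,\phi) \le \epsilon$, and Theorem 1 then yields $\Delta\mathcal{V}_L(\pi,\phi,T) = \mathcal{V}_L(\pi,\phi\circ T) - \mathcal{V}_L(\pi,T) \le \Delta_\pi(T,\phi\circ T)\lVert L \rVert_\infty \le \epsilon \lVert L \rVert_\infty$, which is precisely genericity of quality $\epsilon$.

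For the ``only if'' direction the key observation is that $\Delta_{\pi_X}(\operatorname{id}_X,\phi)$ is itself witnessed by a legitimate instance of the generic-feature game: take $\Theta = X$, $T = \operatorname{id}_X$, $A = X$ and $\pi = \pi_X$, so that $T(\pi) = \pi_X$ and $\phi\circ T = \phi$. Genericity of quality $\epsilon$ then forces $\mathcal{V}_L(\pi_X,\phi) - \mathcal{V}_L(\pi_X,\operatorname{id}_X) \le \epsilon \lVert L \rVert_\infty$ for every loss $L$ on $X \times X$; taking the supremum over $L$ and applying Theorem 1, together with the fact that $\phi$ factors through $\operatorname{id}_X$ (i.e. $\operatorname{id}_X \mid \phi$), so that by the information processing theorem the absolute value inside Theorem 1 is redundant, gives $\Delta_{\pi_X}(\operatorname{id}_X,\phi) \le \epsilon$, as required.

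The only subtle point, and the source of the factor $1/2$ in the statement, is that the $0$--$1$ loss on the reconstruction problem is \emph{not} its worst loss: substituting $L = L_{01}$ alone into the genericity inequality gives only $\inf_d R_{L_{01}}(\pi_X, d\circ\phi) \le \epsilon$, whereas the supremum over \emph{all} losses, which Theorem 1 supplies, doubles this and produces the sharp $\inf_d R_{L_{01}}(\pi_X, d\circ\phi) \le \epsilon/2$. So the ``hard part'' is really bookkeeping: recognising via Theorem 2 that the reconstruction problem is the universal worst case for $\phi$, and invoking the full loss-supremum characterisation of deficiency (Theorem 1) --- rather than the Randomization theorem applied only to $L_{01}$ --- to convert genericity into the tight identity $\Delta_{\pi_X}(\operatorname{id}_X,\phi) = 2\inf_d R_{L_{01}}(\pi_X, d\circ\phi)$.
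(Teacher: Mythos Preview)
Your proof is correct, and the converse direction is essentially identical to the paper's own argument (specialise to $\Theta=X$, $T=\operatorname{id}_X$, $\pi=\pi_X$, pass to the deficiency via the loss-supremum characterisation, then read off the decoder $d$ from the definition of $\delta_{\pi_X}(\phi,\operatorname{id}_X)$; compactness of $M(Z,X)$ guarantees the infimum is attained, as you note).

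The forward direction, however, takes a genuinely different route. The paper does not invoke Theorem~2 or Theorem~1 here: instead it gives an explicit construction. Given the decoder $d$ with reconstruction error at most $\epsilon/2$, and any $D_T\in M(\Theta,A)_T$, it forms $D_{\phi\circ T}=(D_T/T)\circ d\circ\phi\circ T$ and bounds its risk by a coupling argument --- splitting on the event $x'=x$ (probability $\ge 1-\epsilon/2$, contributing $R_L(\pi,D_T)$) versus $x'\neq x$ (probability $\le \epsilon/2$, contributing at most $2\lVert L\rVert_\infty$). Your approach instead packages this step entirely inside the already-proved machinery: $d$ witnesses $\Delta_{\pi_X}(\operatorname{id}_X,\phi)\le\epsilon$, Theorem~2 transports this to $\Delta_\pi(T,\phi\circ T)\le\epsilon$ for every admissible $(T,\pi)$, and Theorem~1 converts that back to the value gap. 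Your route is cleaner and makes transparent that Theorem~3 is really a corollary of Theorems~1 and~2; the paper's route is more elementary and self-contained, and has the side benefit of exhibiting the feature-based decision rule explicitly (which is operationally what one would use in practice).
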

For a proof see appendix. If we optimize over both the encoder $\phi$ and the decoder $d$, finding 
$$
\inf_{\phi, d} \EE_{x\dist \pi_X}\lVert \operatorname{id}_X(x) - (d \circ \phi)(x)\rVert
$$
we obtain a variant of the popular autoencoder algorithm from deep learning \citep{Vincent2008}.

Of course one can always take $\phi= \operatorname{id}_X$ in which case no real feature learning is done and no performance is lost. However, it is more instructive to set $Z$ to a measure space of smaller size/ dimension than $X$ so that the feature learning extracts (provably) useful patterns in $\pi_X$. Performing the joint minimization is a non-convex problem.

\subsection{Relation to other Feature Learning Methods}
\textbf{Infomax}. When faced with a choice of feature maps $\phi_i: X \rightsquigarrow Z_i$, the Infomax principle \citep{Bell1995,Linsker1989} dictates that you should choose the features that minimize the conditional entropy between data and features, $H(X|Z_i)$. By the Hellman-Raviv inequality from information theory \citep{Hellman1970} we have 
$$
\Delta_{\pi_X}(\operatorname{id}_X,\phi_i) \leq H(X|Z_i)
$$
meaning the Infomax principle is minimizing an upper bound of the reconstruction error.

\textbf{Manifold Learning}. Under the assumption that $\pi_X$ has support on some manifold $M \subseteq X$ manifold learning methods aim to extract this manifold and provide a parametrization $\phi: M \rightarrow \RR^n$ \citep{Silva2002,Belkin2003}. If we are able to learn this manifold then any coordinate system would constitute generic features.

\textbf{Sparse Coding}. Much like manifold learning, sparse coding also attempts to find lower dimensional structure in $\pi_X$ \citep{Lee2006,Olshausen1997}. Here $Z$ is chosen to have higher dimension than $X$, however image of the feature map $\phi: X \rightsquigarrow Z$ should comprise only of sparse vectors, those with few non-zero entries. If $\phi$ is injective on the support of $X$ then $\phi$ are generic features.

\subsection{Learning Feature Hierarchies}

One of the tenets of the deep learning paradigm is that features should be learnt in a hierarchical fashion. One should first find patterns in $\pi_X$ through a feature map $\phi$ and then find patterns in $\phi(\pi_X)$ and so on. We construct a chain of feature maps
{\large $$
\xymatrix{
X \ar@{~>}[rr]^{\phi_1} && Z_1 \ar@{~>}[rr]^{\phi_2} && Z_2 \ar@{~>}[rr]^{\phi_3}  && \dots
}
$$}
with final feature space $Z_n$ and final feature map given by the composition of all maps in the chain. Proceeding in this fashion allows greater control over the feature spaces $Z_i$. For example the first could be of similar (but still lower) size than $X$, perhaps before a big drop off in the middle of the chain. If we can learn each of the $\phi_i$ iteratively it also makes searching for features easier. For example, perhaps the first three feature maps have low probability of reconstruction error but not the fourth. In this situation at least we still have a good feature map given by the composition of the first three mappings. We have for any chain of feature maps
\setcounter{equation}{0} 
\begin{align}
&\Delta_{\pi_X}(\operatorname{id}_X, \phi_n \circ \dots \circ \phi_2 \circ \phi_1) \nonumber \\ 
\leq& \Delta_{\pi_X}(\operatorname{id}_X, \phi_1) + \Delta_{\pi_X}(\phi_1, \phi_2 \circ \phi_1) + \dots + \Delta_{\pi_X}(\phi_{n-1} \circ \dots  \circ \phi_1, \phi_n \circ \dots \circ \phi_1) \\
\leq& \Delta_{\pi_X}(\operatorname{id}_X, \phi_1) + \Delta_{\phi_1(\pi_X)}(\operatorname{id}_{Z_1}, \phi_2) + \dots + \Delta_{\phi_{n-1} \circ \dots \circ \phi_2 \circ \phi_1(\pi_X)}(\operatorname{id}_{Z_{n-1}}, \phi_n)
\end{align}
where (1) follows as the weighted deficiency satisfies a triangle inequality and (2) follows from repeated application of theorem 2. The reconstruction error of the entire system is bounded by the sum of reconstruction errors at each step of the chain. This means we can learn a feature mapping iteratively, by first learning patterns in $\pi_X$, then in $\phi(\pi_X)$ and so on. This is exactly the process that occurs in a Deep Belief Network \citep{Hinton2006}

\subsection{Supervised Feature learning can work when Generic Feature Learning Fails}

We present two examples where one can not learn generic features, however we can learn experiment/loss specific features.

\textbf{Experiment Specific Features.} Let $\Theta = \RR$ with $X = \RR^n$ and $T(\theta)$ given by the product of $n$ normal distributions with mean $\theta$ and variance $1$. It is easy to verify that the sample mean $\phi: X \rightarrow \RR$ is a sufficient statistic meaning that at least for this experiment we can greatly compress the information contained in $X$. However, if we take as a prior $\pi$ for $\Theta$ a normal distribution of mean $0$ and variance $1$, then the marginal distribution $\pi_X$ will not be concentrated on a set of smaller dimension nor have any particularly interesting structure. Hence we can not find interesting generic features in this case.

\textbf{Experiment and Loss Specific Features.} Let $\Theta = \{-1,1\}$ with $T(\theta)$ a normal distribution centred on $\theta$ as in the figure below. 

\begin{figure}[h]
\centering
\includegraphics[width=0.8\linewidth]{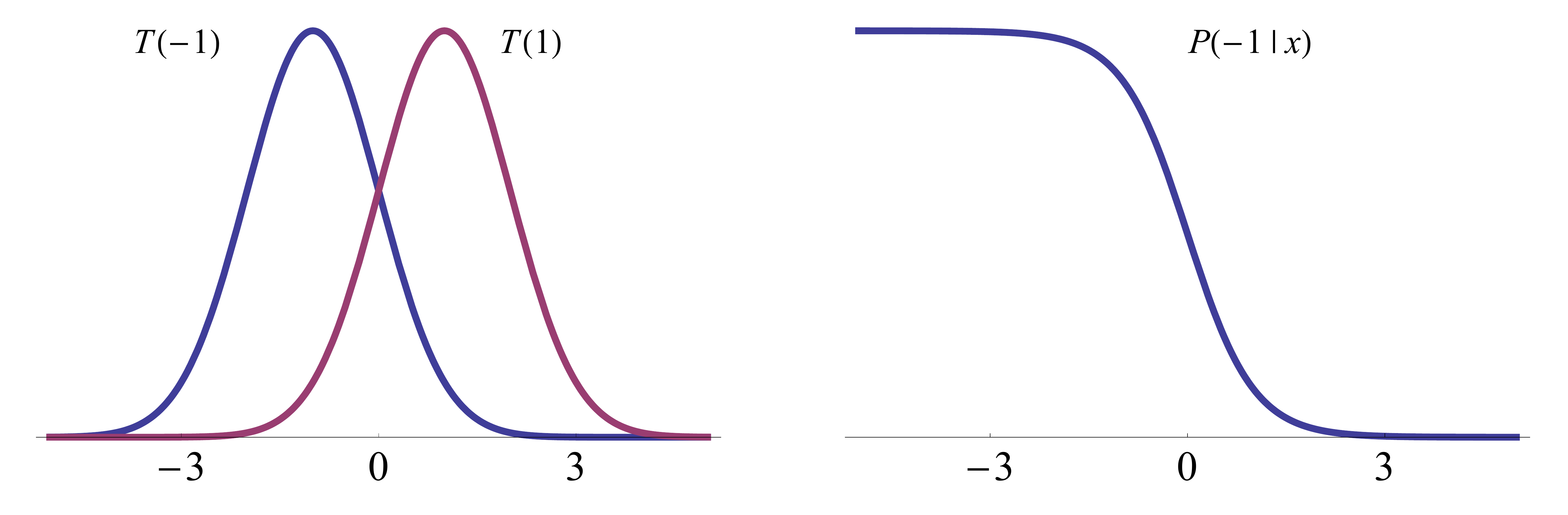}
\caption{Figures for Loss Specific Feature Learning, see text}
\label{fig:Example}
\end{figure}
For this experiment ,$L_{01}$ and a uniform prior $\pi$, the best decision $d(x) = 1$ if $x > 0$ as $P(-1|x)>\frac{1}{2}$ and $d(x) = -1$ otherwise as $P(-1|x)\leq\frac{1}{2}$ . It is easy to show that $\Delta \mathcal{V}_{L_{01}}(\pi,d,T) = 0$, all we need is the output of $d$. However if we change the loss to a cost sensitive loss $L_c$ where say misclassifying a $1$ is more costly than a $-1$, we no longer have $\Delta \mathcal{V}_{L_c}(\pi,d,T) = 0$.

\subsection{Alternate Reconstruction Problems}

The reconstruction problem $(X,Z,\phi,X,L_{01})$ that is required to be solved to construct generic features is the most difficult one we can pose. To perform well in this problem we are required to reconstruct each $x\in X$ \emph{exactly}. This discards other interesting structure the set $X$ may have. For example if $X$ is image data a different loss function $L: X \times X\rightarrow \RR$, perhaps one elicited from psychological tests of what humans perceive to be different images is more appropriate. While these are valid points, we remind the reader that this is a first step in understanding these methods, and making any extra assumptions about $X$ and its structure is exactly what we are trying to avoid. However, the Hellman Raviv inequality does give means of bounding the value $\mathcal{V}_{L_{01}}(\pi,\phi)$ with the value of different reconstruction problems.

\section{Concluding Remarks and Future Work}

We have defined generic features and have provided a characterization of when it is possible to learn them. In doing so we have illuminated some popular feature learning methods including autoencoders, deep belief networks and the Infomax principle.. We have moved from supervised feature learning methods
$$
\inf_{\phi: X \rightsquigarrow Z} \Delta\mathcal{V}_L(\pi,\phi,T)
$$
where $\Theta,A,L,\pi$ are all \emph{fixed} to
\begin{equation}
\inf_{\phi: X \rightsquigarrow Z} \sup_{L: \lVert L \rVert_\infty \leq 1} \sup_T \sup_{\pi \ : \ T(\pi) = \pi_X} \Delta\mathcal{V}_L(\pi,\phi,T) \tag{$\star$}
\end{equation}
with almost \emph{nothing fixed}. Equation $(\star)$ shows how difficult and general finding generic features is. It is reasonable to argue that in practice on does not require features that work for \emph{all} experiments, \emph{all} losses and \emph{all} priors, which by existing results implies a quantification over \emph{all} $f$-informations. This begs the question of \emph{which} experiments, loss functions and priors to consider. We might not require \emph{all} the information in $X$ to be maintained, just enough to suit our purposes. This is analogous to the the problem of formalizing the notion of how much information is contained in an experiment. As argued long ago by Morris Degroot \citep{DeGroot1962}, even if one is doing an experiment to ``gain information", eventually one \emph{does something} with this ``information" by choosing how to act, the consequence of which will be measured by some loss. Hence a more general theory of feature learning needs to be able to control the sensitivity to the loss, allowing one to move from supervised feature learning to generic feature learning.

A starting point would be to take fixed $L$ and $T$ lying in some subset of $M(\Theta,X)$ as occurs in robust statistics \citep{Huber2011}, or to allowing small perturbations in the loss. Deficiency can possibly play a role in the development of algorithms to learn features when we take these restricted supremums. There is scope to develop new quantities and theorems analogous to those for deficiency where instead of a supremum over all losses, one takes a supremum over some restricted subset. At present this is an open and uncharted area of both machine learning and theoretical statistics. 

\newpage

\section{}

\subsection{More Properties of Factoring Through}

\textbf{Lemma} \emph{Factoring through has the following properties. For all sets $W,X,Y,Z$ and Markov kernels $T_1: X\rightsquigarrow Y$ and $T_2: Y \rightsquigarrow W$ we have}

\begin{enumerate}
	\item $M(X,Z)_{T_1} \supseteq M(X,Z)_{T_2 \circ T_1}$
	\item $M(X,Z) = M(X,Z)_{id_X} \supseteq M(X,Z)_{T_1}$
	\item $M(X,Z)_{T_1} \supseteq M(X,Z)_{\bullet_X}$
\end{enumerate}

\begin{proof}
For (1) if $D\in  M(X,Z)_{T_2 \circ T_1}$ then we have 
$$
D = (D/(T_2 \circ T_1)) \circ (T_2 \circ T_1) = ((D/(T_2 \circ T_1)) \circ T_2) \circ T_1 
$$
which is obviously in $M(X,Z)_{T_1}$. For (2) note that for any Markov kernel $D:X \rightsquigarrow Y$ one has $D \circ id_X = D$. Hence all kernels factor through the identity. For (3) take any $D \in M(X,Z)_{\bullet_X}$ and recall that $\bullet_X = \bullet_Y \circ T_1$. Hence 
$$
D = (D/\bullet_X) \circ \bullet_X = (D/\bullet_X) \circ \bullet_Y \circ T_1
$$
which is obviously in $M(X,Z)_{T_1}$
\end{proof}
Note that $M(X,Z)_{\bullet_X}$ comprises the constant Markov kernels, ones that map each $x\in X$ to the same distribution on $Z$. 

One can view factoring through as adding noise. By showing that $T|U$, we are showing that $U$ is $T$ composed with extra noise $U/T$

For some intuition on what factoring through looks like below is a plot of four binary experiments ($\Theta = {-1,1}$). 
\begin{center}
\includegraphics[width=0.8\linewidth]{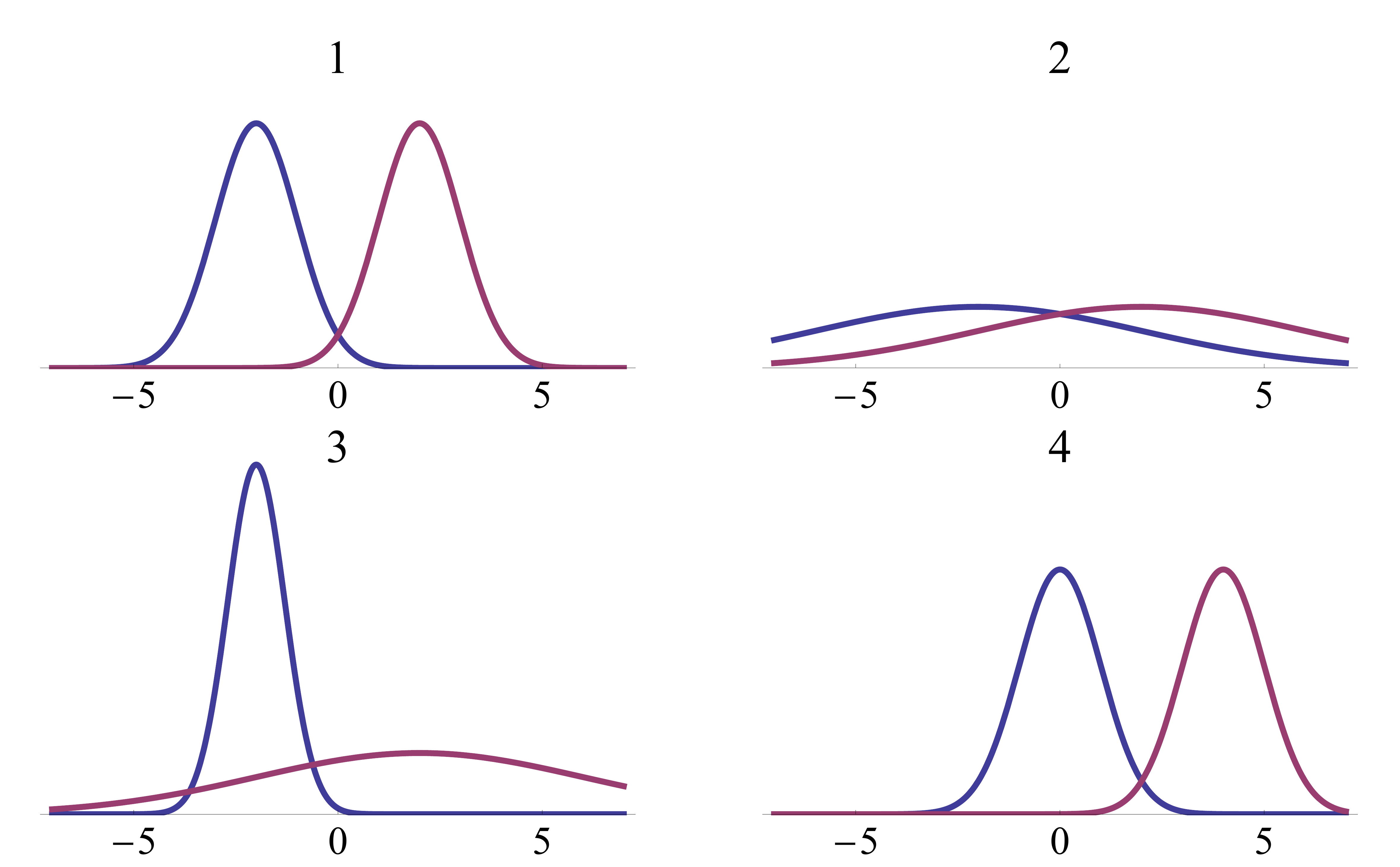}
\end{center}
We have that:
\begin{itemize}

	\item The second factors through the first.
	\item The third does not factor through the first (nor the first through the third).
	\item The fourth factors through the first and vice versa (it is just a shifted version of the first).

\end{itemize}
Suppose $T: \Theta \rightsquigarrow X$ and $U: \Theta \rightsquigarrow Y$ are two Markov kernels. If $T | U$ and $U | T$ then we say that $T$ is isomorphic to $U$ written $T \cong U$. Isomorphic Markov kernels can appear quite different. For example suppose that $T: \Theta \rightsquigarrow X$ is an exponential family distribution ($\Theta$ in this case are the parameters for the family). If $\phi: X \rightarrow \RR^n$ is the sufficient statistics for the family $T$ then $T \cong \phi \circ T$ even though they appear quite different, and that it appears $\phi$ may throw away lots of information. From a statistical point of view isomorphic Markov kernels are the same.


\subsection{$f$-Information and Value}

\textbf{Theorem} \emph{For all experiments $T$, loss functions $L$ and priors $\pi$, the gap between then the value of $T$ and the least informative experiment $\bullet_\Theta$ is a $f$-information for suitable $f$
$$
\mathcal{V}_L(\pi,\bullet_\Theta) - \mathcal{V}_L(\pi,T) = I_{f}(T(\theta_1),\dots, T(\theta_n)) = I_{f}(T)
$$}

We repeat the proof presented in \citet{Garca-Garca}, which is a variant of one presented in \citet{DeGroot1962}.

\begin{proof}
$$
\mathcal{V}_L(\pi,T) = \EE_{x \dist T(\pi)} \Lbar(T^*(x))
$$
and
$$
\mathcal{V}_L(\pi,\bullet_\Theta) = \Lbar(\pi)
$$
where $\Lbar: \mathcal{P}(\Theta) \rightarrow \RR$ is concave. Let $|\Theta| = n$, $\pi_i = \pi(\theta_i)$, $P_i = T(\theta_i)$ and $M = \sum\limits_{i = 1}^{n} \pi_i P_i$. Then $T^*(x) = (\pi_1 \frac{d P_1}{d M}(x), \pi_2 \frac{d P_2}{d M}(x),  \dots, \pi_n \frac{d P_n}{d M}(x))$ and 
\begin{align*}
\mathcal{V}_L(\pi,T) &= \int_X \Lbar(\pi_1 \frac{d P_1}{d M}, \pi_2 \frac{d P_2}{d M},  \dots, \pi_n \frac{d P_n}{d M} ) d M \\
&= \int_X \Lbar(\frac{d P_1}{d M} (\pi_1, \pi_2 \frac{d P_2}{d P_1}, \dots, \pi_n \frac{d P_n}{d P_1}) ) \frac{d M}{d P_1} d P_1 \\
&= \int_X \underbrace{\Lbar(\frac{1}{\pi_1 + \sum\limits_{i = 2}^{n}\pi_i \frac{d P_i}{d P_1}}(\pi_1, \pi_2 \frac{d P_2}{d P_1}, \dots, \pi_n \frac{d P_n}{d P_1}) ) (\pi_1 + \sum\limits_{i = 2}^{n}\pi_i \frac{d P_i}{d P_1})}_{\phi(\frac{d P_2}{d P_1}, \dots , \frac{d P_n}{d P_1})} d P_1
\end{align*}
We need to show that $\phi: \RR_+^{n-1} \rightarrow \RR$ is concave. Note that $\phi = g \circ h$ where 
\begin{align*}
h: \RR_+^{n-1} &\rightarrow \RR_+^{n} \\
v &\mapsto \operatorname{diag}(\pi) (1,v)
\end{align*}
prepends $1$ to $v$ and and multiplies by the prior, and 
\begin{align*}
g: \RR_+^n &\rightarrow \RR \\
v &\mapsto \lVert v \rVert_1 \Lbar(\frac{v}{\lVert v \rVert_1}).
\end{align*}
As $h$ is affine, $\phi$ is concave if $g$ is. For all $\lambda \in [0,1]$ and $v_1, v_2 \in \RR_+^n$ 
\begin{align*}
g(\lambda v_1 + (1- \lambda) v_2) &= \lVert \lambda v_1 + (1- \lambda) v_2 \rVert_1 \Lbar(\frac{\lambda v_1 + (1- \lambda) v_2}{\lVert \lambda v_1 + (1- \lambda) v_2 \rVert_1}) \\
&= \lVert \lambda v_1 + (1- \lambda) v_2 \rVert_1 \Lbar(\frac{\lambda \lVert v_1 \rVert_1 }{\lVert \lambda v_1 + (1- \lambda) v_2 \rVert_1} \frac{v_1}{\lVert v_1 \rVert_1}  +  \frac{(1-\lambda) \lVert v_2 \rVert_1 }{\lVert \lambda v_1 + (1- \lambda) v_2 \rVert_1} \frac{v_2}{\lVert v_2 \rVert_1}  ) \\
&\leq \lambda \lVert v_1 \rVert_1 \Lbar(\frac{v_1}{\lVert v_1 \rVert_1}) + (1 - \lambda)\lVert v_2 \rVert_1 \Lbar(\frac{v_2}{\lVert v_2 \rVert_1}) \\
&= \lambda g(v_1) + (1 - \lambda) g(v_2)
\end{align*}
where we have used the concavity of $\Lbar$. Therefore $\phi$ is concave. Finally 
$$
\mathcal{V}_L(\pi,\bullet_\Theta) - \mathcal{V}_L(\pi,T) = \int_X \Lbar(\pi) - \phi(\frac{d P_2}{d P_1}, \dots , \frac{d P_n}{d P_1}) d P_1 = I_f(\frac{d P_2}{d P_1}, \dots , \frac{d P_n}{d P_1})
$$
with $f = \Lbar(\pi) - \phi$ convex. This completes the proof.

\end{proof}

The converse is also true. The proof follows by similar manipulations and an appeal to the properties of proper losses (see \cite{Reid2009b,Garca-Garca}), and is not inlcuded.

\subsection{Proper Loss Functions}

Here we review material relating to the construction of proper loss functions \citep{Reid2009b,Dawid2007,Parry2012,Grunwald2004}. 

\textbf{Definition} \emph{A loss function $L: \Theta \times \mathcal{P}(\Theta) \rightarrow \RR$ is \emph{proper} if for all $P \in \mathcal{P}(\theta)$
$$
P \in \arginf_{Q \in \mathcal{P}(\Theta)} \EE_{\theta \dist P} L(\theta,Q) 
$$}

Any loss function can be \emph{properized}.

\textbf{Theorem} \emph{Let $L: \Theta \times A \rightarrow \RR$ be a loss. Define 
\begin{align*}
f: \mathcal{P}(\Theta) &\rightarrow A \\
P &\mapsto \arginf_a \EE_{\theta \dist P} L(\theta,a)
\end{align*}
where we arbitrarily pick an $a \in \arginf_a \EE_{\theta \dist P} L(\theta,a)$ if there are multiple. Then $\hat{L}(\theta,P) = L(\theta,f(P))$ is proper.}

It is possible that by using this trick we remove actions $a\in A$. However, for the purpose of calculating Bayes risks we do not require these actions. From $\hat{L}$, one can define a \emph{regret}
$$
D(P,Q) = \EE_{\theta \dist P} \hat{L}(\theta,Q) - \EE_{\theta \dist P} \hat{L}(\theta,P)
$$
which measures how suboptimal the best action is to play against the distribution $Q$ is when played against the distribution $P$. One does not need knowledge of the function $f$ to construct $\hat{L}$, rather one only needs knowledge of the \emph{Bayes risk} 
$$
\Lbar(P) = \inf_a \EE_{\theta \dist P} L(\theta,a).
$$
From this one can reconstruct $\hat{L}$, and hence $L$ for the purposes of calculating Bayes risks. This is achieved by taking the 1-homogenous extension of $\Lbar$ 
\begin{align*}
\tilde{\Lbar}: \RR_+^{|\Theta|} &\rightarrow \RR \\
v &\mapsto \lVert v \rVert_1 \Lbar(\frac{v}{\lVert v \rVert_1})
\end{align*}
and differentiating.

\textbf{Theorem} \emph{For a concave $\Lbar: \mathcal{P}(\Theta) \rightarrow \RR$ 
$$
L(\theta,P) = \langle \delta_\theta , \nabla \tilde{\Lbar}(P) \rangle
$$
is a proper loss.}

The regret from a proper loss is equal to the \emph{Bregman divergence} defined by $\tilde{\Lbar}(P)$ and $\Lbar$,

\textbf{Theorem} \emph{For all concave $\Lbar: \mathcal{P}(\Theta) \rightarrow \RR$ and $P,Q \in \mathcal{P}(\Theta)$
$$
D(P,Q) = D_{\Lbar}(P,Q) = D_{\tilde{\Lbar}}(P,Q) = \tilde{\Lbar}(Q) + \langle \nabla \tilde{\Lbar}(Q) , P - Q \rangle - \tilde{\Lbar}(P).
$$}
All of these properties show that we only need knowledge of $\Lbar$ to compute Bayes risks and values.

\subsection{Supervised Feature Learning}

For a given experiment $T$ and loss function $L$, supervised feature learning methods aim to minimize the feature gap 
$$
\Delta\mathcal{V}_L(\pi,\phi,T)
$$
The following two lemmas \citep{Banerjee2005,Tishby2000} provide means to do this.
\begin{mylemma} The feature gap satisfies
\begin{align*}
\Delta\mathcal{V}_L(\pi,\phi,T) &= \EE_{x \dist \pi_x} \EE_{z\dist \phi(x)} D_{\Lbar}(T^*(x),(\phi \circ T)^*(z)) \\
&= \EE_{x \dist \pi_x} \EE_{z\dist \phi(x)} D_{\Lbar}(P(\Theta|x),P(\Theta|z)) 
\end{align*}
where $D_{\Lbar}$ is the regret induced by $\Lbar$. 
\end{mylemma}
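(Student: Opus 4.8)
The plan is to expand both sides in terms of the Bayes risk $\Lbar$ and the regret $D_{\Lbar}$ it induces, using the representation $\mathcal{V}_L(\pi,T) = \EE_{x\dist\pi_X}\Lbar(T^*(x))$, and then to collapse the iterated expectation on the right using the Markov chain structure $\Theta\rightsquigarrow X\rightsquigarrow Z$. The second equality in the statement, $T^*(x) = P(\Theta\mid x)$ and $(\phi\circ T)^*(z) = P(\Theta\mid z)$, is just Bayes' rule written out for the joint law $P(\theta,x,z) = \pi(\theta)\,T(x\mid\theta)\,\phi(z\mid x)$ on $\Theta\times X\times Z$, so I would note it in passing and concentrate on the first equality.

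First I would invoke the appendix material on proper losses: for the properised loss $\hat L$ one has $\EE_{\theta\dist P}\hat L(\theta,P) = \Lbar(P)$, hence the regret satisfies $D_{\Lbar}(P,Q) = \EE_{\theta\dist P}\hat L(\theta,Q) - \Lbar(P)$. Substituting $P = T^*(x)$ and $Q = (\phi\circ T)^*(z)$ and applying $\EE_{x\dist\pi_X}\EE_{z\dist\phi(x)}$ splits the right-hand side into the term $\EE_{x\dist\pi_X}\Lbar(T^*(x)) = \mathcal{V}_L(\pi,T)$ and the term $\EE_{x\dist\pi_X}\EE_{z\dist\phi(x)}\EE_{\theta\dist T^*(x)}\hat L\!\big(\theta,(\phi\circ T)^*(z)\big)$, which I claim equals $\mathcal{V}_L(\pi,\phi\circ T)$. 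Granting the claim, the right-hand side is $\mathcal{V}_L(\pi,\phi\circ T) - \mathcal{V}_L(\pi,T) = \Delta\mathcal{V}_L(\pi,\phi,T)$, which is exactly what we want.

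For the claim, the key observation is that under $P(\theta,x,z)$ the variable $z$ is drawn from $\phi(x)$ with no further dependence on $\theta$, so $z\perp\theta\mid x$, and therefore $T^*(x) = P(\Theta\mid x) = P(\Theta\mid x,z)$. Hence $\EE_{(x,z)}\EE_{\theta\dist T^*(x)}[\cdot] = \EE_{(x,z)}\EE_{\theta\dist P(\Theta\mid x,z)}[\cdot] = \EE_{(\theta,x,z)}[\cdot]$; since the integrand $\hat L(\theta,(\phi\circ T)^*(z))$ does not depend on $x$, I can marginalise $x$ out to obtain $\EE_{z\dist(\phi\circ T)(\pi)}\EE_{\theta\dist(\phi\circ T)^*(z)}\hat L(\theta,(\phi\circ T)^*(z))$, and properness of $\hat L$ turns the inner expectation into $\Lbar((\phi\circ T)^*(z))$, leaving exactly $\EE_{z\dist(\phi\circ T)(\pi)}\Lbar((\phi\circ T)^*(z)) = \mathcal{V}_L(\pi,\phi\circ T)$ (note $(\phi\circ T)(\pi) = \phi(\pi_X)$). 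The only delicate point is bookkeeping: $(\phi\circ T)^*(z)$ is defined only on the support of $\phi(\pi_X)$, which is precisely where all the relevant mass lies, so every posterior is well defined wherever it is used, and in the finite setting assumed throughout the conditional-independence identity $P(\Theta\mid x) = P(\Theta\mid x,z)$ is elementary. An alternative would expand $D_{\Lbar}$ directly via the gradient of $\tilde{\Lbar}$ and argue by linearity of $x\mapsto T^*(x)\pi_X(x)$, but the proper-loss route reuses the appendix lemmas verbatim and is cleaner.
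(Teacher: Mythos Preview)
Your argument is correct, but it differs from the paper's in how it unpacks $D_{\Lbar}$. The paper expands the regret explicitly as a Bregman divergence,
\[
D_{\Lbar}(P,Q) = \Lbar(Q) + \langle \nabla \Lbar(Q), P - Q\rangle - \Lbar(P),
\]
so that under $\EE_{x}\EE_{z\mid x}$ the $\Lbar(Q)$ and $-\Lbar(P)$ terms yield $\mathcal{V}_L(\pi,\phi\circ T)$ and $-\mathcal{V}_L(\pi,T)$ directly, and the gradient cross-term vanishes because it is affine in $P(\Theta\mid x)$ and $\EE_{x\dist P(X\mid z)} P(\Theta\mid x) = P(\Theta\mid z)$. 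You instead keep $D_{\Lbar}(P,Q) = \EE_{\theta\dist P}\hat L(\theta,Q) - \Lbar(P)$ in its proper-loss form, use the Markov structure $\Theta\to X\to Z$ to rewrite $\EE_{x}\EE_{z\mid x}\EE_{\theta\mid x}$ as a full joint expectation, marginalise out $x$, and then invoke properness to collapse $\EE_{\theta\mid z}\hat L(\theta,P(\Theta\mid z))$ to $\Lbar(P(\Theta\mid z))$. Both routes rest on the same tower-property identity; yours has the advantage of never needing $\Lbar$ to be differentiable (so it works verbatim for non-smooth losses), while the paper's gradient expansion makes the ``Bregman'' structure of the gap visually explicit and connects more directly to the Banerjee et al.\ clustering literature cited alongside the lemma. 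You even flag the paper's approach as your alternative in the last sentence, so you have both arguments in hand.
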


\begin{proof}

For the proof we use the more familiar probability theory notation with $T^*(x) = P(\Theta|x)$, $(\phi \circ T)^* = P(\Theta|z)$, $\pi_X = P(X)$ and so on. One has 
$$
D_{\Lbar}(P,Q) = \Lbar(Q) + \langle \nabla \Lbar(Q) , P - Q \rangle - \Lbar(P), \ P,Q \in P(\Theta).
$$
and
\begin{align*}
\mathcal{V}_L(\pi,T) = \EE_{x\dist P(X)} \Lbar(P(\Theta|x)) \\
\mathcal{V}_L(\pi,\phi \circ T) = \EE_{x\dist P(Z)} \Lbar(P(\Theta|z))
\end{align*}
giving
\begin{align*}
&\EE_{x \dist P(X)} \EE_{z\dist P(Z|x)} D_{\Lbar}(P(\Theta|x),P(\Theta|z)) \\
= &\EE_{x \dist P(X)} \EE_{z\dist P(Z|x)} \Lbar(P(\Theta|z)) + \langle \nabla \Lbar(P(\Theta|z)) , P(\Theta|x) - P(\Theta|z) \rangle - \Lbar(P(\Theta|x)) \\
= &\EE_{z \dist P(Z)} \EE_{x\dist P(X|z)} \Lbar(P(\Theta|z)) + \langle \nabla \Lbar(P(\Theta|z)) , P(\Theta|x) - P(\Theta|z) \rangle - \Lbar(P(\Theta|x)) \\
= & \mathcal{V}_L(\pi,\phi \circ T) - \mathcal{V}_L(\pi,T) + \EE_{z \dist P(Z)} \EE_{x\dist P(X|z)}\langle \nabla \Lbar(P(\Theta|z)) , P(\Theta|x) - P(\Theta|z) \rangle
\end{align*}
Note that $\langle \nabla \Lbar(P(\Theta|z)) , P(\Theta|x) - P(\Theta|z) \rangle$ is affine in $P(\Theta|x)$ and that 

$$
\EE_{x \dist P(X|z)} P(\Theta|x) = P(\Theta|z)
$$
meaning 
$$
\EE_{z \dist P(Z)} \EE_{x\dist P(X|z)}\langle \nabla \Lbar(P(\Theta|z)) , P(\Theta|x) - P(\Theta|z) \rangle = 0
$$
this completes the proof.

\end{proof}

\begin{mylemma}
For $\pi_Z\in \mathcal{P}(Z)$, $\hat{U} \in M(Z,\Theta)$
\begin{align*}
&\inf_\phi \EE_{x \dist \pi_X} \EE_{z\dist \phi(x)} D_{\Lbar}(T^*(x),(\phi \circ T)^*(z)) + \beta I(X,Z)\\
= &\inf_{\phi}\inf_{\hat{\pi}_Z}\inf_{\hat{U}} \EE_{x \dist \pi_X} \EE_{z\dist \phi(x)} D_{\Lbar}(T^*(x),\hat{U}(z)) + \EE_{x \dist \pi_X} D_{KL}(\phi(x) || \hat{\pi}_Z)
\end{align*}
\end{mylemma}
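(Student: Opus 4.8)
The plan is to fix the encoder $\phi$ and minimise the right-hand side over $\hat U$ and over $\hat\pi_Z$ separately, exploiting the fact that $\hat U$ occurs only in the Bregman term while $\hat\pi_Z$ occurs only in the relative-entropy term. Once both inner minima are evaluated in closed form, the claimed identity drops out by applying $\inf_\phi$ to both sides.

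\emph{The $\hat U$ minimisation.} Fix a $z$ with $P(Z=z)>0$ (the other $z$ contribute nothing). With the representation $D_{\Lbar}(P,Q)=\tilde{\Lbar}(Q)+\langle\nabla\tilde{\Lbar}(Q),P-Q\rangle-\tilde{\Lbar}(P)$ from the appendix and the posterior-consistency identity $\EE_{x\dist P(X|z)}T^{*}(x)=(\phi\circ T)^{*}(z)$ — the same cancellation already used in the proof of Lemma 1 — one obtains the Pythagorean split
$$
\EE_{x\dist P(X|z)}D_{\Lbar}\big(T^{*}(x),\hat U(z)\big)=\EE_{x\dist P(X|z)}D_{\Lbar}\big(T^{*}(x),(\phi\circ T)^{*}(z)\big)+D_{\Lbar}\big((\phi\circ T)^{*}(z),\hat U(z)\big).
$$
Since the last term is non-negative and vanishes at $\hat U(z)=(\phi\circ T)^{*}(z)\in\mathcal{P}(\Theta)$ — a legitimate value for a kernel in $M(Z,\Theta)$ — the pointwise minimiser is the Bayes-inverted kernel $(\phi\circ T)^{*}$, and averaging over $z\dist P(Z)$ gives $\inf_{\hat U}\EE_{x\dist\pi_X}\EE_{z\dist\phi(x)}D_{\Lbar}(T^{*}(x),\hat U(z))=\EE_{x\dist\pi_X}\EE_{z\dist\phi(x)}D_{\Lbar}(T^{*}(x),(\phi\circ T)^{*}(z))$.

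\emph{The $\hat\pi_Z$ minimisation.} Writing $\pi_Z=\phi(\pi_X)$ for the true marginal of the features, expanding $\log\frac{\phi(x)(z)}{\hat\pi_Z(z)}=\log\frac{\phi(x)(z)}{\pi_Z(z)}+\log\frac{\pi_Z(z)}{\hat\pi_Z(z)}$ and using $I(X,Z)=\EE_{x\dist\pi_X}D_{KL}(\phi(x)\,\|\,\pi_Z)$ yields
$$
\EE_{x\dist\pi_X}D_{KL}\big(\phi(x)\,\|\,\hat\pi_Z\big)=I(X,Z)+D_{KL}\big(\pi_Z\,\|\,\hat\pi_Z\big).
$$
By non-negativity of relative entropy this is minimised, over all $\hat\pi_Z\in\mathcal{P}(Z)$, at $\hat\pi_Z=\pi_Z$, with value $I(X,Z)$ (if the rate term carries a weight $\beta\ge 0$, scaling a non-negative quantity does not move the minimiser, so the same holds with $\beta I(X,Z)$).

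Combining, for each $\phi$ the bracket on the right splits into a part depending only on $\hat U$ and a part depending only on $\hat\pi_Z$, hence $\inf_{\hat\pi_Z}\inf_{\hat U}(\cdots)=\EE_{x\dist\pi_X}\EE_{z\dist\phi(x)}D_{\Lbar}(T^{*}(x),(\phi\circ T)^{*}(z))+\beta I(X,Z)$; taking $\inf_\phi$ of both sides completes the proof. I do not anticipate a real obstacle: the entire content is the two Pythagorean identities, one recycled from Lemma 1 and one a one-line expansion of the KL term. The only care needed — routine in the finite setting assumed here — is the pointwise-in-$z$ reduction of the $\hat U$ optimisation (with zero-probability $z$ irrelevant) and, as throughout the paper, the tacit differentiability of $\tilde{\Lbar}$ underlying the Bregman representation.
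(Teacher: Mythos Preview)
Your proposal is correct and follows essentially the same approach as the paper: separate the objective into the Bregman term (depending only on $\hat U$) and the KL term (depending only on $\hat\pi_Z$), and minimise each by showing the optimiser is the conditional mean. The only cosmetic difference is that the paper invokes the Banerjee--Merugu--Dhillon--Ghosh theorem (``the mean minimises expected Bregman divergence'') as a black box for both steps, whereas you write out the underlying Pythagorean identities explicitly; your treatment is thus slightly more self-contained but otherwise identical in content.
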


This lemma is proved by the following theorem from \citet{Banerjee2005}.

\begin{mythm}
For any concave $\Lbar: X \rightarrow \RR$ and distribution $P\in \mathcal{P}(X)$ 
$$
\bar{x} \in \arginf_{x} \EE_{y \dist x} D_{\Lbar}(y,x).
$$
\end{mythm}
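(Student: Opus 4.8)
The plan is to reduce the claim to the classical fact that a Bregman-type divergence, averaged over its first argument, is minimised at the barycentre, using the representation of the regret recalled earlier in the appendix, $D_{\Lbar}(P,Q) = D_{\tilde{\Lbar}}(P,Q) = \tilde{\Lbar}(Q) + \langle \nabla \tilde{\Lbar}(Q), P - Q\rangle - \tilde{\Lbar}(P)$, together with the concavity of the $1$-homogeneous extension $\tilde{\Lbar}$. I read $\bar x$ in the statement as the barycentre $\bar x := \EE_{y \dist P}\, y$, which lies in $\mathcal{P}(\Theta)$ by convexity; this is the form actually needed to establish Lemma 2, where one takes $P = P(X\mid z)$ and $\bar x = \EE_{x \dist P(X\mid z)} P(\Theta\mid x) = P(\Theta\mid z)$.

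First I would expand the objective at an arbitrary $x$ using the displayed formula for $D_{\tilde{\Lbar}}$ and linearity of expectation. As a function of $y$, $D_{\Lbar}(y,x) = \tilde{\Lbar}(x) + \langle \nabla\tilde{\Lbar}(x), y - x\rangle - \tilde{\Lbar}(y)$ has a constant term, an affine term, and the non-affine term $-\tilde{\Lbar}(y)$; since $\EE_{y\dist P}\langle \nabla\tilde{\Lbar}(x), y - x\rangle = \langle \nabla\tilde{\Lbar}(x), \bar x - x\rangle$, we obtain
$$\EE_{y\dist P} D_{\Lbar}(y,x) = \tilde{\Lbar}(x) + \langle \nabla\tilde{\Lbar}(x), \bar x - x\rangle - \EE_{y\dist P}\tilde{\Lbar}(y).$$
The last term is independent of $x$, so it plays no role in the $\arginf$. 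Evaluating the remaining two terms at $x = \bar x$ kills the inner product, leaving $\tilde{\Lbar}(\bar x) - \EE_{y\dist P}\tilde{\Lbar}(y)$.

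Subtracting, for every $x$,
$$\EE_{y\dist P} D_{\Lbar}(y,x) - \EE_{y\dist P} D_{\Lbar}(y,\bar x) = \tilde{\Lbar}(x) - \tilde{\Lbar}(\bar x) + \langle \nabla\tilde{\Lbar}(x), \bar x - x\rangle = D_{\Lbar}(\bar x, x) \geq 0,$$
the nonnegativity being exactly the defining inequality of the regret — the supporting hyperplane to the concave $\tilde{\Lbar}$ at $x$ lies above $\tilde{\Lbar}$ — which is precisely the cancellation phenomenon already exploited in the appendix proof of Lemma 1. Hence $\bar x$ attains the infimum, which is the assertion.

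The only real difficulty is a technicality rather than a genuine obstacle: if $\bar x$ lies on the relative boundary of the simplex $\mathcal{P}(\Theta)$, $\tilde{\Lbar}$ need not be differentiable there, so $\nabla\tilde{\Lbar}(x)$ should be interpreted as a supergradient of $\tilde{\Lbar}$; both the identity $\EE_{y\dist P}\langle g, y - x\rangle = \langle g, \bar x - x\rangle$ and the inequality $D_{\Lbar}(\bar x, x)\geq 0$ survive verbatim with any supergradient $g$ in place of the gradient, and since the paper restricts to finite $\Theta$ and treats such smoothness points informally throughout, I would simply flag this and proceed. (If one wanted a \emph{unique} minimiser, strict concavity of $\tilde{\Lbar}$, i.e.\ a strictly proper loss, would be required, which is not assumed here — hence the ``$\in$'' rather than ``$=$'' in the statement.)
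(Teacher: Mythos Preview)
Your argument is correct. The paper itself does not supply a proof of this theorem: it merely states the result and attributes it to \citet{Banerjee2005}, then immediately uses it to prove Lemma~2. Your derivation --- expanding $\EE_{y\dist P} D_{\Lbar}(y,x)$ via the Bregman representation, killing the $x$-independent term $-\EE_{y\dist P}\tilde{\Lbar}(y)$, and recognising the residual as $D_{\Lbar}(\bar x,x)\ge 0$ --- is exactly the standard proof given in that reference (their Proposition~1, the ``Bregman projection to the mean'' identity $\EE D(y,x)=\EE D(y,\bar x)+D(\bar x,x)$). Your reading of the garbled notation ($\EE_{y\dist x}$ should be $\EE_{y\dist P}$, and $\bar x$ is the barycentre of $P$) is the intended one, as the subsequent proof of Lemma~2 confirms. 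The caveats you flag about supergradients at boundary points and non-uniqueness without strict concavity are apt and handled the same way in the cited source.
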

We can know prove the lemma

\begin{proof}
Once again we use the more standard notation from probability theory. Firstly
$$
I(X;Y) = \EE_{x \dist P(X)} D_{KL}(P(Z|x),P(Z)) = \inf_{\hat{P}(Z)} \EE_{x \dist P(X)} D_{KL}(P(Z|x),\hat{P}(Z)).
$$
since $\EE_{x\dist P(X)} P(Z|x) = P(Z)$. Secondly 
\begin{align*}
\EE_{x \dist P(X)} \EE_{z\dist P(Z|x)} D_{\Lbar}(P(\Theta|x),P(\Theta|z)) &= \EE_{Z \dist P(Z)} \EE_{x\dist P(X|z)} D_{\Lbar}(P(\Theta|x),P(\Theta|z)) \\
&= \EE_{Z \dist P(Z)} \inf_{\hat{P}(\Theta|z)}\EE_{x\dist P(X|z)} D_{\Lbar}(P(\Theta|x),\hat{P}(\Theta|z)) \\
&= \inf_{\hat{P}(\Theta|Z)} \EE_{Z \dist P(Z)} \EE_{x\dist P(X|z)} D_{\Lbar}(P(\Theta|x),\hat{P}(\Theta|z))
\end{align*}
since $\EE_{x\dist P(X|z)} P(\Theta|x) = P(Z|\Theta)$. Combining gives 
\begin{align*}
&\inf_{P(Z|X)} \EE_{x \dist P(X)} \EE_{z\dist P(Z|x)} D_{\Lbar}(P(\Theta|x),P(\Theta|z)) + \beta \EE_{x \dist P(X)} D_{KL}(P(Z|x),P(Z)) \\
= & \inf_{P(Z|X)} \inf_{\hat{P}(\Theta|Z)} \inf_{\hat{P}(Z)} \EE_{x \dist P(X)} \EE_{z\dist P(Z|x)} D_{\Lbar}(P(\Theta|x),\hat{P}(\Theta|z)) + \beta\EE_{x \dist P(X)} D_{KL}(P(Z|x),\hat{P}(Z)).
\end{align*}
This completes the proof.
\end{proof}

These two lemma's yield a family of alternating minimization algorithms that attempt to solve the supervised feature learning problem. The term $\beta I(X;Z)$ can be interpreted as a regularizer that favours $\phi$ that throw away information about $X$.

\subsection{Variational Divergence}

Let $P$ and $Q$ be distributions on a measure space $X$. The variational divergence has the following equivalent forms 
\begin{enumerate}

\item $\|P - Q\| = \sup_{\phi: X \rightarrow [-1,1]} \EE_P \phi - \EE_Q \phi $
\item $\|P - Q\| = \inf_C$ such that $\EE_P \phi - \EE_Q \phi \leq C \lVert \phi \rVert_\infty$ for all bounded $\phi$.
\item $\|P - Q\| = \int_X |P - Q|$ the $l_1$ distance between the probability distributions $P$ and $Q$. 
\item $\|P - Q\| = D_f(P,Q) = \int_X f(\frac{dQ}{dP}) dP$ the f-divergence from $P$ to $Q$ for $f(x) = |x-1|$  
\item $\|P - Q\| = 2 \sup_{A \in \Sigma(X)} P(A) - Q(A), A\subseteq X$

\end{enumerate}
Since $\lVert \ \rVert$ is a $f$-divergence, it also satisfies an information processing theorem \citep{Reid2009b}. For any Markov kernel $T: X \rightsquigarrow Y$,
$$
\lVert P - Q \rVert \geq \lVert T(P) - T(Q) \rVert
$$
Finally if $\pi \in \mathcal{P}(X)$ and $T,U: X \rightsquigarrow Y$ then
$$
\lVert \pi \otimes T - \pi \otimes U \rVert = \EE_{\theta \dist \pi} \lVert T(\theta) - U(\theta) \rVert
$$ 

\subsection{Proof that Weighted Deficiency Satisfies the Triangle Inequality}

We wish to show that
$$
\Delta_\pi (T,U) = \max(\delta_{\pi}(T,U), \delta_{\pi}(U,T))
$$
provides a metric on experiments on $\Theta$ (modulo isomorphism), for priors $\pi$ that do not assign zero mass to some $\theta$. It should be fairly obvious that $\Delta_\pi$ is both symmetric and non negative. All that is left to prove is the triangle inequality. Fix Markov kernels $T_i: \Theta \rightsquigarrow X_i$ as well as Markov kernels $V_1$ and $V_2$ as in the diagram below
{\large 
$$
\xymatrix{
 && \Theta \ar@{~>}[lld]_{T_1} \ar@{~>}[d]^{T_2} \ar@{~>}[rrd]^{T_3} &&\\
X_1 \ar@{~>}[rr]^{V_1} && X_2 \ar@{~>}[rr]^{V_2} && X_3
}
$$
}
We have $\forall \theta \in \Theta$
\begin{align*}
\lVert T_3(\theta) - V_2 \circ V_1 \circ T_1(\theta) \rVert &\leq \lVert T_3(\theta) - V_2 \circ T_2(\theta) \rVert + \lVert V_2\circ T_2(\theta) - V_2 \circ V_1 \circ T_1(\theta) \rVert \\
&\leq \lVert T_3(\theta) - V_2 \circ T_2(\theta) \rVert + \lVert T_2(\theta) - V_1 \circ T_1(\theta) \rVert
\end{align*}
where we have used the fact the variational divergence is a metric and that it satisfies an information processing theorem. Averaging with respect to the prior and taking infimums over $V_1$ and $V_2$ yields
$$
\delta_\pi(T_1||T_3) \leq \delta_\pi(T_1||T_2) + \delta_\pi(T_2||T_3).
$$
Going in the opposite direction and taking maximums yields 
$$
\Delta_\pi(T_1,T_3) \leq \Delta_\pi(T_1,T_2) + \Delta_\pi(T_2,T_3)
$$
the desired result. Even if $\pi$ does give zero mass to some $\theta$, meaning $\Delta_{\pi}$ may not be a metric, the triangle inequality still applies.

\subsection{Proof of the Information Processing Theorem}

\textbf{Theorem (Blackwell-Sherman-Stein)} \emph{$T|U$ if and only if $\mathcal{V}_L(\pi,T) \leq \mathcal{V}_L(\pi,U)$ for all loss functions $L$ and priors $\pi$.}

\begin{proof}

By definition we have 
\begin{align*}
\mathcal{V}_L(\pi,T) &= \inf_{D \in M(\Theta,A)_T} R_L(\pi,D) \\
&= \inf_{D \in M(\Theta,A)_T} f(D)
\end{align*}
Where $f: M(\Theta,A)\rightarrow \RR$. By property $(1)$ of factoring through we have $M(\Theta,A)_{T} \supseteq M (\Theta,A)_{U}$ giving 
\begin{align*}
\mathcal{V}_L(\pi,T) &= \inf_{D \in M(\Theta,A)_T} f(D) \\
&\leq \inf_{D \in M(\Theta,A)_U} f(D) \\
&= \mathcal{V}_L(\pi,U)
\end{align*}

\end{proof}

\subsection{Proof of the Blackwell-Sherman-Stein theorem}

\textbf{Theorem (Blackwell-Sherman-Stein)} \emph{$T|U$ if and only if $\mathcal{V}_L(\pi,T) \leq \mathcal{V}_L(\pi,U)$ for all loss functions $L$ and priors $\pi$.}

Here we prove the converse to the information processing theorem.
\begin{proof}

The condition that $\mathcal{V}_L(\pi,T) \leq \mathcal{V}_L(\pi,U)$ for all loss functions and priors is equivalent to 
$$
\forall L,\ \forall D_U \in M(\Theta,A)_U, \ \exists D_T \in M(\Theta,A)_T \ \text{such that} \ R_L(\delta_\theta,D_Y) \leq R_L(\delta_\theta,d_Y), \ \forall \theta
$$
in words, for any loss and any decision rule based on $U$ there is one based on $T$ that is better. 

To see this note for each $\theta_0 \in \Theta$ we can take loss functions that only care about that particular $\theta_0$, ie $L(\theta,a) = 0$ if $\theta \neq \theta_0$.  Now fix $A$. We have that $R_L(\delta_\theta,-): M(\Theta,A) \rightarrow \RR$ is linear. Furthermore $M(\Theta,A)_{P_{T}}$ and $M(\Theta,A)_U$ are closed convex subsets of $M(\Theta,A)$. As we vary $L$ and $\theta$, $R_L(\delta_\theta,-)$ gives the entire dual space of $M(\Theta,A)$, ie by taking $L(\theta_0,a_0)=1$ and zero for all other $\theta$ and $a$. By the correspondence between risk functions and the dual of $M(\Theta,A)$ means the above is equivalent to
$$
\inf_{D \in M(\Theta,A)_T} \langle \alpha , D \rangle \leq \inf_{D \in M(\Theta,A)_U} \langle \alpha , D \rangle,\ \forall \alpha \in M(\Theta,A)^*
$$
meaning that the support function of $M(\Theta,A)_T$ is less than or equal to the support function of\\ $M(\Theta,A)_T$. Hence 
$$
M(\Theta,A)_U \supseteq M(\Theta,A)_T
$$
Note $A$ was arbitrary. Now let $A = Y$, and note that $U \in M(\Theta,Y)_U$. Hence by the above inclusion, there exists a $U/T: X \rightarrow Y$ with $U = (U/T) \circ T$

\end{proof}

\subsection{Proof of the Randomization theorem}

\textbf{Theorem (Randomization)} \emph{Fix $\epsilon>0$, $T,U$ and $\pi$. $\mathcal{V}_L(\pi,T) \leq \mathcal{V}_L(\pi,U) + \epsilon \lVert L \rVert_{\infty}$ if and only if $\delta_\pi(T,U) \leq \epsilon$ for all loss functions $L$.}

Firstly the if direction. 
\begin{proof}(Forward implication)

If $\delta(T||U) \leq \epsilon$ then there exists a Markov kernel $V$ such that 
$$
\EE_{\theta \dist \pi}\lVert U(\theta) - V \circ T(\theta) \rVert \leq \epsilon
$$
Fix a decision rule $D_U \in M(\Theta,A)_U$. Using $V$ gives a decision rule $D_T \in M(\Theta,A)_T$ by composition, $D_T = (D_U/U) \circ V \circ T$. See the diagram below for a better explanation.
{
\large
$$
\xymatrix{
& \Theta \ar@{~>}[ld]_T \ar@{~>}[rd]^U & &&  \\
X\ar@{~>}[rr]^V && Y \ar@{~>}[rr]^{D_U/U} && A
}
$$
}
By the properties of the variational divergence one has 
\begin{align*}
R_L(\pi,D_T) - R_L(\pi, D_U) &= \EE_{\theta \dist \pi} \EE_{a\dist D_T(\theta)} L(\theta,a) - \EE_{\theta \dist \pi} \EE_{a\dist D_U(\theta)} L(\theta,a) \\
&\leq \EE_{\theta \dist \pi} \lVert D_U(\theta) - D_T(\theta) \rVert \lVert L \rVert_\infty \\
&= \EE_{\theta \dist \pi} \lVert (D_U/U)\circ U(\theta) - ((D_U/U) \circ V \circ T)(\theta) \rVert \lVert L \rVert_\infty \\
&\leq  \EE_{\theta \dist \pi} \lVert U(\theta) -  (V \circ T)(\theta) \rVert \lVert L \rVert_\infty \\
&\leq \epsilon \lVert L \rVert_\infty
\end{align*}
where the first line is the definition of risk, the second follows from one definition of the variational divergence and the third follows from the fact the variational divergence is itself an $f$-divergence and as such satisfies an information processing theorem. Note this holds for all $D_U \in M(\Theta,A)_U$.

Taking infimums over $D_T$ and $D_U$, one has $\mathcal{V}_L(\pi,T) \leq \mathcal{V}_L(\pi,U)  + \epsilon \lVert L \rVert_{\infty}$ for all loss functions.
\end{proof}
We now prove the converse.
\begin{proof}(Converse)

Fix A, a decision rule $D_U\in M(\Theta,A)_U$ and a prior $\pi$ and define a function 
$$
f(L,D_T) = R_L(\pi,D_T) - R_L(\pi,D_U) - \epsilon \lVert L \rVert_{\infty}
$$
that takes a decision rule $D_T\in M(\Theta,A)_T$ and a loss $L$ and returns the difference in Bayes risks of $D_T$ and $D_U$ subtracted by a term that penalizes losses with high magnitude. Note that $f$ is a linear in $D_T$ and concave in $L$. By the conditions in the theorem one has  
$$
\sup_L \inf_{D_T} f(L,D_T) \leq 0.
$$
By the minimax theorem \citep{Komiya1988} , there exists a saddle point $(L^*,D_T^*)$ with 
$$
f(L^*,D_T^*) = \sup_L \inf_{D_T} f(L,D_T) = \inf_{D_T} \sup_L  f(L,D_T).
$$

Hence $f(L,D_T^*) \leq 0, \ \forall L$, meaning 
$$
R_L(\pi,D_T^*) - R_L(\pi,D_U) \leq  \epsilon \lVert L \rVert_{\infty} \ \forall L.
$$ 

This implies that 
$$
\EE_{\theta \dist \pi} \EE_{a\dist D_T^*(\theta)} L(\theta,a) - \EE_{\theta \dist \pi} \EE_{a\dist D_U(\theta)} L(\theta,a) \leq\epsilon \lVert L \rVert_{\infty} \ \forall L
$$

meaning
$$
\lVert D_T^* \otimes \pi - D_U \otimes\pi \rVert = \EE_{\theta \dist \pi} \lVert D_T (\theta) - D_U(\theta) \rVert \leq \epsilon
$$
As $D_U$ was arbitrary, we have for all $D_U\in M(\Theta,A)_U$ there exists a $D_T\in M(\Theta,A)_T$ with $\EE_{\theta \dist \pi} \lVert D_T (\theta) - D_U(\theta) \rVert \leq \epsilon$. 

Once again, take $Y=A$ and note that $U\in M(\Theta,Y)_U$.

\end{proof}

\subsection{Proof of Theorem 1}

\textbf{Theorem} \emph{For all experiments $U,T$ and all priors $\pi$
$$
\Delta_\pi(U,T) = \sup_L \frac{|\mathcal{V}_L(\pi,U) - \mathcal{V}_L(\pi,T)|}{\lVert L \rVert_{\infty}}
$$}

For the proof we require the following very simple lemma.

\begin{mylemma}
For $x,y \in \RR$ if $\forall \epsilon \in \RR$ we have $x \leq \epsilon \Leftrightarrow y \leq \epsilon$ then $x = y$.
\end{mylemma}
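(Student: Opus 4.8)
The plan is to prove $x=y$ by establishing the two inequalities $x \leq y$ and $y \leq x$ separately and then invoking the antisymmetry of the order on $\RR$. The only content of the lemma is the clever choice of which real number to substitute for the universally quantified $\epsilon$; once the right instantiations are made, everything follows from reflexivity of $\leq$.

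First I would instantiate the hypothesis at $\epsilon = y$. The statement $y \leq y$ holds trivially by reflexivity, so the ``if'' direction of the biconditional $x \leq \epsilon \Leftrightarrow y \leq \epsilon$ (read right-to-left at $\epsilon = y$) yields $x \leq y$. Symmetrically, I would instantiate the hypothesis at $\epsilon = x$: since $x \leq x$ holds by reflexivity, the ``only if'' direction of the biconditional at $\epsilon = x$ yields $y \leq x$.

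Having both $x \leq y$ and $y \leq x$, antisymmetry of the total order on $\RR$ gives $x = y$, which is the desired conclusion. There is no substantive obstacle here; the proof is essentially a one-line argument, and the ``hard part'' is merely recognizing that the two useful test values of $\epsilon$ are $x$ and $y$ themselves, which turn the biconditional into the pair of inequalities needed for antisymmetry.
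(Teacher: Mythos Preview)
Your proof is correct. It is a direct argument: instantiate the hypothesis at $\epsilon = y$ and $\epsilon = x$, use reflexivity of $\leq$ to fire each direction of the biconditional, and conclude by antisymmetry.

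The paper's proof takes a slightly different route. It argues by contradiction: assuming $x \neq y$ and (without loss of generality) $x < y$, it instantiates the hypothesis at the single value $\epsilon = \tfrac{x+y}{2}$, so that $x \leq \epsilon$ holds while $y \leq \epsilon$ fails, violating the biconditional. Your approach uses two instantiations and no contradiction; the paper uses one instantiation but wraps it in a reductio. Both are one-line arguments, and yours is marginally cleaner in that it avoids the indirect structure and the need to pick a strictly intermediate point.
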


\begin{proof}
Suppose that $x \neq y$ and without loss of generality assume that $x \leq y$. Set $\epsilon = \frac{x + y}{2}$. Then $x \leq \epsilon$ and $y \geq \epsilon$, a contradiction.
\end{proof}

Now the theorem.

\begin{proof}
If $\Delta_\pi(U,T) \leq \epsilon$ then $\delta_\pi(U,T) \leq \epsilon$ and $\delta_\pi(T,U) \leq \epsilon$. By the randomization theorem
$$
| \mathcal{V}_L(\pi,U) - \mathcal{V}_L(\pi,T) | \leq \epsilon \lVert L \rVert_\infty ,\ \forall L
$$
hence
$$
\sup_L \frac{|\mathcal{V}_L(\pi,U) - \mathcal{V}_L(\pi,T)|}{\lVert L \rVert_{\infty}} \leq \epsilon.
$$
Conversely if $\sup_L \frac{|\mathcal{V}_L(\pi,U) - \mathcal{V}_L(\pi,T)|}{\lVert L \rVert_{\infty}} \leq \epsilon$
then for all $L$
$$
\mathcal{V}_L(\pi,T) \leq \mathcal{V}_L(\pi,U) + \epsilon \lVert L \rVert_{\infty}
$$
and 
$$
\mathcal{V}_L(\pi,U) \leq \mathcal{V}_L(\pi,T) + \epsilon \lVert L \rVert_{\infty}
$$
which by the randomization theorem gives $\Delta_\pi(U,T) \leq \epsilon$. Combining these two facts and the lemma completes the proof.
\end{proof}

\subsection{Proof of Theorem 2}

\textbf{Theorem} \emph{For all experiments $T: \Theta \rightsquigarrow X$, for all measure spaces $Z$ and for all feature maps $\phi: X \rightsquigarrow Z$ 
$$
\Delta_\pi(T,\phi \circ T) \leq \Delta_{T(\pi)}(id_X,\phi).  
$$}

To prove this theorem we note that from the randomization theorem we have \citep{Torgersen1991}
$$
\Delta_\pi(T,U) = \sup_L \frac{| \mathcal{V}_L(\pi,T) - \mathcal{V}_L(\pi,U) |}{\lVert L \rVert_{\infty}}.
$$
By standard manipulations from decision theory one has
\begin{align*}
\mathcal{V}_L(\pi,T) &= \inf_{d: X \rightsquigarrow A} \EE_{\theta \dist \pi} \EE_{x \dist T(\theta)} \EE_{a \dist d(x)} L(\theta,a) \\
&= \inf_{d: X \rightsquigarrow A} \EE_{x \dist \pi_X} \EE_{\theta \dist T^*(x)} \EE_{a \dist d(x)} L(\theta,a) \\
&= \EE_{x \dist \pi_X} \inf_{a\in A} \EE_{\theta \dist T^*(x)} L(\theta,a) \\
&= \EE_{x \dist \pi_X} \Lbar(T^*(x))
\end{align*}
giving
$$
\Delta_\pi(T,U) = \sup_L \frac{| \EE_{x\dist T(\pi)}\Lbar(T^*(x)) - \EE_{y\dist U(\pi)}\Lbar(U^*(x)) |}{\lVert L \rVert_{\infty}}.
$$
Due to the correspondence between loss functions and their Bayes risks \citep{Reid2009b,Dawid2007,Grunwald2004,Garca-Garca} (In particular one can recover the loss from its Bayes Risk), and the fact that Bayes risks $\Lbar: \mathcal{P}(\Theta) \rightarrow \RR$ are ``attached directly to $\Theta$'', in feature learning we may wish to move them to objects that are attached to $X$. We are inspired by the diagram below.
{\large $$
\xymatrix{
 & & &  Z \\
\Theta \ar@{~>}[rr]^T \ar@{~>}@/^2pc/[rrru]^{\phi \circ T} \ar@{~>}@/_2pc/[rrrd]_T & & X \ar@{~>}[ru]^\phi \ar@{~>}[dr]_{id_X} &  \\
 & & & X  
}
$$}
This is achieved by the following lemma.
\begin{mylemma}

Let $\mathcal{L}(\Theta)$ denote the set of all Bayes risks on $\Theta$ (proper concave functions). Fix a Markov kernel $T: \Theta \rightsquigarrow X$ and a prior $\pi$ on $\Theta$. Then the Markov kernel $T^*: \Theta \rightsquigarrow \Theta$ provided by Bayes rule gives a function
\begin{align*}
\mathcal{\Lbar}[T]: \mathcal{L}(\Theta) &\rightarrow \mathcal{L}(X) \\
\Lbar \mapsto \Lbar \circ T^*
\end{align*}
furthermore for a chain of Markov kernels
{\large $$
\xymatrix{
\Theta \ar@{~>}[rr]^T \ar@{~>}@/_2pc/[rrrr]_{\phi \circ T = U} && X \ar@{~>}[rr]^\phi && Z}
$$}
we have that $\EE_{z \dist U(\pi)} \Lbar(U^*(y)) = \EE_{z \dist U(\pi)} \mathcal{\Lbar}[T](\Lbar)(\phi^*(y))$

\end{mylemma}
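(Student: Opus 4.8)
The plan is to handle the two assertions of the lemma separately: first that $\mathcal{\Lbar}[T]$ is a well-defined map $\mathcal{L}(\Theta)\to\mathcal{L}(X)$, and then the displayed identity for a chain. \textbf{Well-definedness of $\mathcal{\Lbar}[T]$.} As noted in the discussion of composition, every Markov kernel induces a convex-linear (hence affine) map on probability distributions, so Bayes rule $T^{*}:X\rightsquigarrow\Theta$ gives an affine map $T^{*}:\mathcal{P}(X)\to\mathcal{P}(\Theta)$. For $\Lbar\in\mathcal{L}(\Theta)$, i.e. a concave $\Lbar:\mathcal{P}(\Theta)\to\RR$, the composite $\Lbar\circ T^{*}:\mathcal{P}(X)\to\RR$ is then concave, being a concave function precomposed with an affine map. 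By the properization result recalled in the appendix — for any concave $g:\mathcal{P}(X)\to\RR$ the loss $L(x,P)=\langle\delta_{x},\nabla\tilde{g}(P)\rangle$ is proper with Bayes risk $g$ — every concave function on $\mathcal{P}(X)$ belongs to $\mathcal{L}(X)$, so $\mathcal{\Lbar}[T](\Lbar)=\Lbar\circ T^{*}\in\mathcal{L}(X)$.

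\textbf{The chain identity.} The content is that Bayes rule composes along a Markov chain. Form the joint distribution $P$ on $\Theta\times X\times Z$ by drawing $\theta\dist\pi$, then $x\dist T(\theta)$, then $z\dist\phi(x)$. Since $\phi$ depends on $x$ alone, $\theta\to x\to z$ is a Markov chain under $P$, so $P(\theta\mid x,z)=P(\theta\mid x)$. The objects in the lemma are precisely conditionals of $P$: with $U=\phi\circ T$ one has $U^{*}(z)=P(\Theta\mid z)$, $T^{*}(x)=P(\Theta\mid x)$ computed against $\pi$, and $\phi^{*}(z)=P(X\mid z)$ computed against $\pi_{X}=T(\pi)$ — consistent because $\pi_{X}$ is the $X$-marginal of $P$. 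Conditioning on $x$ and using the Markov property,
$$P(\theta\mid z)=\sum_{x}P(\theta\mid x,z)\,P(x\mid z)=\sum_{x}P(\theta\mid x)\,P(x\mid z),$$
i.e. $U^{*}(z)=T^{*}\!\left(\phi^{*}(z)\right)$ for every $z$ with $U(\pi)(z)>0$. Since $\mathcal{\Lbar}[T](\Lbar)=\Lbar\circ T^{*}$, applying $\Lbar$ and averaging over $z\dist U(\pi)$ gives $\EE_{z\dist U(\pi)}\Lbar(U^{*}(z))=\EE_{z\dist U(\pi)}\mathcal{\Lbar}[T](\Lbar)(\phi^{*}(z))$, the asserted equality.

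\textbf{Where the difficulty lies.} Because everything is finite there is no analytic obstacle. The two steps needing attention are: (a) observing that bare concavity suffices for membership in $\mathcal{L}(X)$, which is exactly what the properization correspondence provides; and (b) checking that the intermediate Bayes rule $\phi^{*}$ must be taken against the pushed-forward prior $\pi_{X}=T(\pi)$ in order for $T^{*}\circ\phi^{*}$ to agree with $U^{*}$ — this is the one place the compatibility of the priors enters, and a careless version of the argument would go wrong here. Once $U^{*}=T^{*}\circ\phi^{*}$ is established pointwise on the support of $U(\pi)$, passing to expectations is immediate.
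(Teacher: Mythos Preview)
Your proof is correct and follows essentially the same route as the paper: concavity of $\Lbar\circ T^{*}$ via precomposition with an affine map, and the chain identity via the factorization $U^{*}=T^{*}\circ\phi^{*}$. The paper's version is far terser --- it simply asserts $T^{*}\circ\phi^{*}(z)=U^{*}(z)$ without justification --- whereas you supply the Markov-chain argument and correctly flag that $\phi^{*}$ must be computed against the pushed-forward prior $\pi_{X}=T(\pi)$, a point the paper leaves implicit.
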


\begin{proof}

It should be fairly obvious that $\mathcal{\Lbar}[T](\Lbar)$ is concave as it is the composition of a linear function and a concave function. Furthermore
\begin{align*}
\EE_{z \dist U(\pi)} \mathcal{\Lbar}[T](\Lbar)(\phi^*(z)) &= \EE_{z \dist U(\pi)} \Lbar (T^* \circ \phi^*(z)) \\
&= \EE_{z \dist U(\pi)} \Lbar (U^*(z))
\end{align*}

\end{proof}
we can now prove theorem 2
\begin{proof}
\setcounter{equation}{0}
\begin{align}
\Delta_\pi(T,\phi \circ T) &= \sup_{\Lbar \in \mathcal{L}(\Theta)} \frac{| \EE_{x\dist T(\pi)}\Lbar(T^*(x)) - \EE_{y\dist \phi \circ T(\pi)}\Lbar(T^*\circ \phi^*(x)) |}{\lVert L \rVert_{\infty}} \\
& = \sup_{\Lbar \in \mathcal{L}(\Theta)} \frac{| \EE_{x \dist T(\pi)} \mathcal{\Lbar}[T](\Lbar)(\delta_x) -\EE_{y \dist \phi \circ T(\pi)} \mathcal{\Lbar}[T](\Lbar)(\phi^*(y)) |}{\lVert \mathcal{\Lbar}[T](L) \rVert_\pi} \\
&\leq \sup_{\Lbar \in \mathcal{L}(X)} \frac{| \EE_{x\dist T(\pi)}\Lbar(\delta_x) - \EE_{y\dist \phi \circ T(\pi)}\Lbar(\phi^*(x)) |}{\lVert L \rVert_{\infty}} \\
&= \Delta_{T(\pi)}(id_X,\phi) \nonumber
\end{align}
\end{proof}

\subsection{Proof of Theorem 3}

\textbf{Theorem} \emph{Fix a prior $\pi_X$ and an $\epsilon >0$.  $\phi:X \rightsquigarrow Z$ constitutes generic features of quality $\epsilon$ for $X$ if and only if for the reconstruction problem $(X,Z,\phi,X,L_{01})$, one can find a decision rule $d: Z \rightsquigarrow X$ with 
$$
R_{L_{01}}(\pi_X,d \circ \phi) \leq \frac{\epsilon}{2}.
$$}

First the if.

\begin{proof}(Forward Implication)
Let $T: \Theta \rightsquigarrow X $ be an experiment, $L$ a loss and $\pi$ a prior on $\Theta$ with $T(\pi) = \pi_X$. For any $D_T \in M(\Theta,A)_T$ consider 
$$
D_{\phi \circ T} = (D_T/T)\circ d \circ \phi \circ T \in M(\Theta,A)_{\phi \circ T}.
$$
We wish to calculate
$$
R_L(\pi,D_{\phi \circ T}) = \EE_{\theta \dist \pi} \EE_{x \dist T(\theta)} \EE_{x' \dist (d\circ \phi)(x)} \EE_{a\dist (D_T/T)(x')}L(\Theta,x).
$$
To calculate this risk we consider two cases. Either $x=x'$ with probability $1-\frac{\epsilon}{2}$, giving risk $R_L(\pi,D_T)$, or $x' \neq x$ with probability $\frac{\epsilon}{2}$ with risk bounded above by $2 \lVert L \rVert_\infty$. This upper bound occurs when the action chosen by $d$ after seeing $x$ is the best for $\theta$ and the action chosen by $d$ after seeing $x'$ is the worst for $\theta$. Combining these two gives 
$$
R_L(\pi,D_{\phi \circ T}) \leq (1-\frac{\epsilon}{2}) R_L(\pi,D_T) + \frac{\epsilon}{2} 2 \lVert L \rVert_\infty \leq R_L(\pi,D_T) + \epsilon \lVert L \rVert_\infty.
$$
Note that $T$ and $L$ are arbitrary. Taking infimums over $D_T$ and $D_{\phi \circ T}$ gives 
$$
\mathcal{V}_L(\pi,\phi \circ T) \leq \mathcal{V}_L(\pi,T) + \epsilon \lVert L \rVert_\infty, \ \forall L, \ \forall T, \ \forall \pi \ st\  T(\pi) = \pi_X.
$$

\end{proof}

Now the converse.

\begin{proof}(Converse)

If
$$
\mathcal{V}_L(\pi,\phi \circ T) \leq \mathcal{V}_L(\pi,T) + \epsilon \lVert L \rVert_\infty, \ \forall L, \ \forall T, \ \forall \pi \ st\  T(\pi) = \pi_X
$$
then
$$
\Delta_\pi(\phi \circ T, T) \leq \epsilon , \ \forall T, \ \forall \pi \ st\  T(\pi) = \pi_X
$$
in particular by taking $T= \operatorname{id}_X$ and $\pi = \pi_X$ we have
$$
\Delta_{\pi_X}(\phi, \operatorname{id}_x) \leq \epsilon.
$$
By the definition of weighted deficiency and as $\operatorname{id}_X | \phi $, $\Delta_{\pi_X}(\phi, \operatorname{id}_x) = \delta_\pi(\phi,\operatorname{id}_X)$. Therefore there exists a markov kernel $d: Z \rightsquigarrow X$ with 
$$
\EE_{x\dist \pi_X} \lVert \delta_x - (d\circ\phi)(x)\rVert = 2 \EE_{x\dist \pi_X} \EE_{x' \dist (d\circ\phi)(x) } \mathbb{1}(x' = x) \leq \epsilon
$$
Hence $R_{L_{01}}(\pi_X,d\circ \phi) \leq \frac{\epsilon}{2}$.

\end{proof}

\small
\bibliographystyle{plain}

\end{document}